\documentclass[10pt,twocolumn,letterpaper]{article}

\usepackage[pagenumbers]{cvpr} 

\definecolor{cvprblue}{rgb}{0.21,0.49,0.74}
\usepackage[pagebackref,breaklinks,colorlinks,allcolors=cvprblue]{hyperref}

\usepackage[table]{xcolor}
\usepackage{multirow}

\usepackage{amsmath, amssymb, amsthm}
\usepackage{graphicx}

\newtheorem{theorem}{Theorem}
\newtheorem{assumption}{Assumption}
\newtheorem{corollary}{Corollary}

\newcommand{\specialcell}[2][c]{%
  \begin{tabular}[#1]{@{}c@{}}#2\end{tabular}}

\def\paperID{****} 
\def\confName{CVPR}
\def\confYear{2026}

\title{L3DR: 3D-aware LiDAR Diffusion and Rectification}

\author{Quan Liu$^1$ \ \ \ \ \ Xiaoqin Zhang$^{2}$ \ \ \ \ \ Ling Shao$^3$ \ \ \ \ \ Shijian Lu$^{1,\dagger}$ \\
$^1$Nanyang Technological University \ \ \ \ \ $^2$Zhejiang University of Technology\\
$^3$UCAS-Terminus AI Lab, University of Chinese Academy of Sciences\\
{\href{https://github.com/liuQuan98/L3DR}{https://github.com/liuQuan98/L3DR}}
}

\begin{document}

\twocolumn[{%
\renewcommand\twocolumn[1][]{#1}%
\maketitle
\begin{center}
    \centering
    \captionsetup{type=figure}
    \vspace{-0.6cm}
    \includegraphics[width=\linewidth]{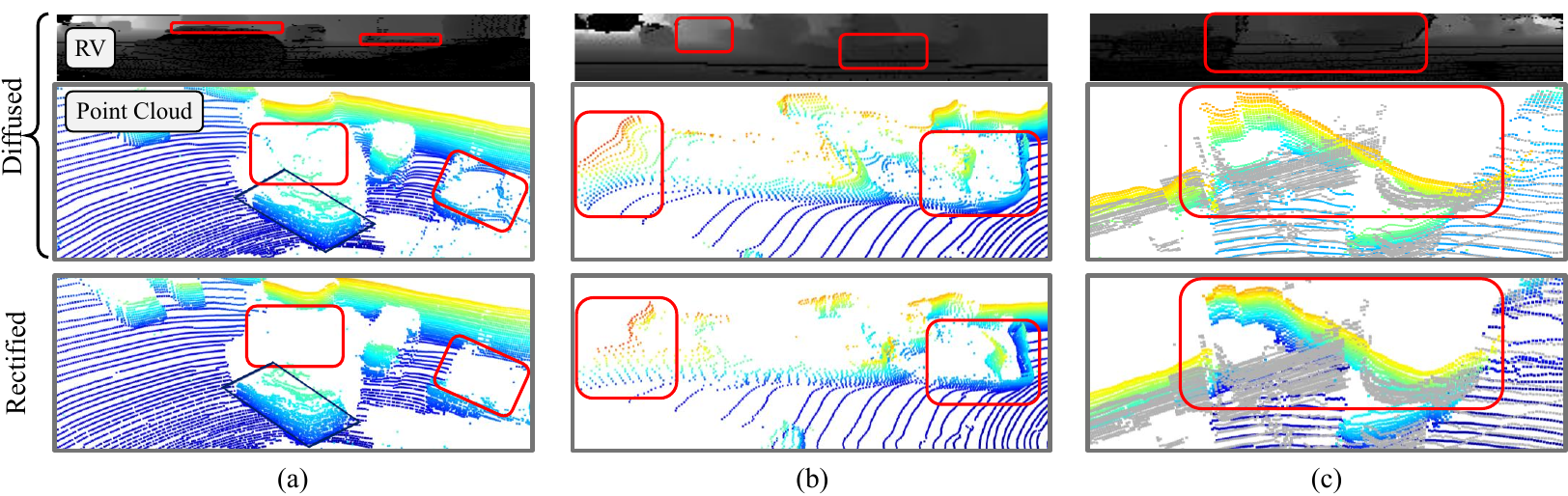}
    \vspace{-0.7cm}
    \captionof{figure}{\textbf{L3DR effectively rectifies LiDAR range-view (RV) diffusion artifacts by selectively ignoring anomalous training regions.} (a) \textit{Depth bleeding} creates fake points between the foreground vehicle and the background. (b) \textit{Wavy surfaces and rounded edges} synthesized by RV diffusion are straightened and sharpened after rectification. (c) Anomalous regions in training data pairs, \textit{e.g.,} a diffusion-generated wall perpendicular to ground truth (GT), can overshadow RV artifacts and hijack artifact removal task; these are suppressed with the Welsch Loss (see section \ref{sec:decomposition_of_objective}). Generated and rectified point clouds are \textcolor{orange!80!black}{c}\textcolor{yellow!85!black}{o}\textcolor{green!80!black}{l}\textcolor{cyan!90!black}{or}\textcolor{blue}{ed} while GT point clouds are in \textcolor{gray}{gray}.}
    \label{fig:motivation}
    \vspace{0.4cm}
\end{center}%
}]

\begin{abstract}
Range-view (RV) based LiDAR diffusion has recently made huge strides towards 2D photo-realism. However, it neglects 3D geometry realism and often generates various RV artifacts such as depth bleeding and wavy surfaces. We design L3DR, a 3D-aware LiDAR Diffusion and Rectification framework that can regress and cancel RV artifacts in 3D space and restore local geometry accurately. Our theoretical and empirical analysis reveals that 3D models are inherently superior to 2D models in generating sharp and authentic boundaries. Leveraging such analysis, we design a 3D residual regression network that rectifies RV artifacts and achieves superb geometry realism by predicting point-level offsets in 3D space. On top of that, we design a Welsch Loss that helps focus on local geometry and ignore anomalous regions effectively. Extensive experiments over multiple benchmarks including KITTI, KITTI360, nuScenes and Waymo show that the proposed L3DR achieves state-of-the-art generation and superior geometry-realism consistently. In addition, L3DR is generally applicable to different LiDAR diffusion models with little computational overhead.
\end{abstract}
\let\thefootnote\relax
\footnotetext{\textsuperscript{\textdagger} Corresponding author: \textcolor{cvprblue}{Shijian.Lu@ntu.edu.sg}}
\vspace{-1.3cm}
\section{Introduction}
Point clouds captured by LiDARs (Light Detection and Ranging) are the cornerstone of outdoor 3D computer vision, facilitating various perception tasks such as detection \citep{zhu2022vpfnet, ku2018joint}, segmentation \citep{xu2020squeezesegv3}, tracking \citep{wu20213d,simon2019complexer}, and SLAM \citep{montemerlo2002fastslam,mur2017orb}. Though they offer superior perception capabilities and safety guarantees for self-driving vehicles, collecting large-scale LiDAR point clouds is prohibitive in terms of sensor procurement, intensive human labour involved~\citep{Caesar_2020_CVPR,Sun_2020_CVPR}, etc. Automated generation of high-quality point clouds has thus become a critical factor for extensive adoption of LiDAR data in various 3D perception tasks.

To this end, a practical LiDAR point cloud generation method must achieve 1) realism in global layout that accurately recreates the coarse-grained global depth distribution on the range view (RV) representation, 2) realism in local geometry that perfectly replicates fine-grained local geometries such as self-occlusion, and 3) light computational overhead which empowers cost-efficient data generation.


Inspired by the massive success of Diffusion Models (DMs) \citep{sohl2015deep} featured by Stable-diffusion \citep{rombach2022high} and ControlNet \citep{zhang2023adding} in image generation, a line of research pioneered by LiDARGen \citep{zyrianov2022learning} has extended image-based DMs for generating layout-realistic point clouds in the RV representation. However, although RV enables DM-based point cloud generation by projecting 3D point clouds to 2D images, it hinders accurate discernment of sparsity and self-occlusion in the 3D space which directly leads to various range-view artifacts such as \textit{depth bleeding} and \textit{wavy surfaces}.
As depicted in Figure \ref{fig:motivation}(a), even realistic RV images can have serious depth bleeding artifacts, a phenomenon of errorneous depth continuity near edges. Additionally, since 3D plain surfaces are projected onto sinusoidal surfaces in images, LiDAR DMs tend to produce suboptimal 3D geometry-realism with wavy surfaces and round corners as illustrated in Figure \ref{fig:motivation} (b). These artifacts greatly undermine the realism of 3D data generated by LiDAR diffusion models. 

This paper presents L3DR, a 3D-aware LiDAR Diffusion and Rectification framework that generates geometrically realistic point clouds by rectifying the back-projected 3D coordinates of diffusion-generated RV images. L3DR works by tackling two challenges. The first is on training data for training a residual regression network (RRN) to rectify 3D coordinates. Unlike normal diffusion training data that can be obtained by adding noises to the input data, RV artifacts exhibit irregular occurrence with little explicit distributions. We therefore introduce semantic-conditioned LiDAR diffusion to generate point clouds and their ground-truth (GT). Thanks to the powerful conditional diffusion models, the obtained data are structurally similar yet imbued with RV artifacts, making them perfect for training the regression network. The second is on training loss. Due to anomalous regions scattered across the training data as illustrated in Figure \ref{fig:motivation} (c), minimizing residual regression errors with L1/L2 loss may over-attend to the anomalous regions and neglect local geometries. We thus introduce Welsch Loss to guide the network to ignore high-bias regions and focus on local geometry artifacts, leading to superior geometry realism in the generated point clouds.

The contributions of this work can be summarized in three major aspects:

\begin{itemize}
    \item We propose a 3D-aware LiDAR Diffusion and Rectification framework that rectifies RV geometry artifacts with a 3D residual regression network and achieves superb realism in both global layout and local geometry. 
    \item We introduce the Welsch Loss that helps pass over high-bias regions in the training data and achieves high-quality residual regression over local geometry artifacts, leading to superior geometry realism in the generated point clouds.
    \item Extensive experiments on SemanticKITTI \citep{Geiger2012CVPR,behley2019iccv}, KITTI360 \citep{Liao2022PAMI}, nuScenes \citep{Caesar_2020_CVPR}, and Waymo Open Dataset \citep{Sun_2020_CVPR} show that the proposed L3DR framework outperforms the state-of-the-art consistently with negligible additional computation cost.
\end{itemize}
\section{Related Work}
\subsection{Diffusion Models}
Diffusion models (DMs) \citep{sohl2015deep,ho2020denoising,song2020denoising,dhariwal2021diffusion,ho2022classifier} comprise a prominent line of contemporary image synthesis methods. Following the early success of Jascha \emph{et al.} \citep{sohl2015deep}, DMs have demonstrated superior scalability \citep{radford2021learning,rombach2022high,ramesh2022hierarchical}, along with various ways to control the image content \citep{zhang2023adding,zhao2023uni}. DMs have garnered increasing attention due to several advantages over other image synthesis methods like Generative Adversarial Network (GAN) \citep{goodfellow2014generative} and Variational Autoencoder (VAE) \citep{kingma2022autoencodingvariationalbayes}. Firstly, DMs are far more capable than VAEs due to their multi-step denoising process being able to provide larger effective model capacity with limited parameters. Secondly, thanks to the simple but robust loss function, DMs are not prone to mode collapse which haunts GANs. 
The great success of image diffusion has recently inspired several LiDAR diffusion models \citep{caccia2019deep,zyrianov2022learning}.
Therefore, we choose to continue on the line of diffusion models for LiDAR generation task.

\subsection{LiDAR Generation}
\paragraph{Range view generation.} 
RV-based generation~\citep{caccia2019deep,haghighi2024taming,hu2024rangeldm, nakashima2021learning,nakashima2024lidar,nakashima2023generative,nakashima2024fast,zyrianov2022learning,ran2024towards,matteazzi2025adverse} converts point clouds into depth images where the height represents elevation and width represents azimuth.  Caccia~\emph{et al.} \citep{caccia2019deep} pioneer this line of research to generate RV images with GAN and VAE. DUSty \citep{nakashima2021learning} improves GAN-based generation by estimating disentangled depth and ray-drop probabilities, and the later DUStyV2 \citep{nakashima2023generative} further decomposes spatial resolution with latent codes to enable LiDAR parameter manipulation. LiDARGen \citep{zyrianov2022learning} first exploits diffusion models on RV-based point cloud generation. R2DM \citep{nakashima2024lidar} applies explicit positional encoding on range-reflectance images to improve generation quality. LidarGRIT \citep{haghighi2024taming} improves latent denoising with an image transformer and raydrop estimation. RangeLDM \citep{hu2024rangeldm} introduces height offsets of lasers to correct traditional spherical RV projection. LiDM \citep{ran2024towards} enables image, text, and semantic conditioned point cloud generation with curve-wise compression. OLiDM \citep{yan2025olidm} improves object generation with a two-stage object-scene diffusion cascade. However, these RV methods all rely on depth images instead of actual 3D geometry, leading to inaccurate local geometries in the generated point clouds.


\paragraph{Re-sampled LiDAR from 3D representations.}
This line of LiDAR data generation methods first generates a scene representation, often from multi-modal data with map conditions, and then samples LiDAR point clouds by simulating the laser reflection process. 
For example, LidarDM \citep{zyrianov2024lidardm} leverages explicit scene-level and object-level mesh diffusion, then re-sample novel LiDAR point clouds with ray casting. 
DynamicCity \citep{bian2024dynamiccity} leverages a novel statio-temporal 4D representation called HexPlane, from which they diffuse novel occupancy map videos.
UniScene \citep{li2024uniscene} further recasts point clouds from occupancy videos with prior guided sampling.
While achieving incredible visual authenticity and temporal consistency, these methods are notoriously resource-hungry, rendering them less economical for LiDAR data generation.

\section{Pilot Study}
\label{sec:pilot}

This section presents whether 2D diffusion models are inherently worse at generating smooth object borders than 3D models. We first introduce the preliminaries on diffusion and deduce a theoretical upper bound for RV gradients as generated by 2D models and 3D models, respectively. We validate our claim with actual gradient distributions of 2D- and 3D-rectified RV images.

\subsection{Preliminaries}

\paragraph{Diffusion models.} Diffusion models generate images by reversing a gradual noise-adding process \citep{sohl2015deep}. The core idea is to corrupt a complex image distribution into a Gaussian noise distribution in the forward process, and learn the reverse process to generate new image samples.

\paragraph{Forward process.} Given a data sample \( x_0\in \mathbb{R}^{u\times v}, x_0\sim q(x_0) \) which represents the target distribution, the forward process gradually adds Gaussian noise over \( T \) steps according to a fixed schedule:
\begin{equation}
    q(x_t \mid x_0) = \mathcal{N}(x_t; \sqrt{\bar{\alpha}_t} \, x_0, (1 - \bar{\alpha}_t) \, \mathbf{I}),
\end{equation}
where \( \alpha_t \in (0, 1) \) are predefined noise scales and \( \bar{\alpha}_t = \prod_{s=1}^{t} \alpha_s \). This process defines a sequence \( x_1, x_2, \ldots, x_T \) where \( x_T \) is approximately standard Gaussian.

\paragraph{Reverse sampling process.} The goal is to learn a reverse process \( p_\theta(x_{t-1} \mid x_t) \) that reconstructs clean samples starting from Gaussian noise:
\begin{equation}
    p_\theta(x_{t-1} \mid x_t) = \mathcal{N}(x_{t-1}; \mu_\theta(x_t, t), \Sigma_\theta(x_t, t)).
    \label{eq:reverse_sampling}
\end{equation}
Typically, a model is trained to predict the added noise \( \epsilon \), and then restore the image using Equation \ref{eq:reverse_sampling}.

\subsection{Theoretical Analysis}
We provide Theorem \ref{th:continuous} where a constant bound can be derived for the image gradient in DDIM-sampled images. We then provide a proof sketch, while the full proof is placed in Appendix Sec.~\ref{ap:proof}.

\begin{theorem}
\label{th:continuous}
Given the assumption that diffusion UNets are Lipschitz continuous, the output image $x_0$ generated by DDIM is locally Lipschitz continuous with respect to the input noise $x_T$. Moreover, the spatial gradient of $x_0$ is bounded:
\[
\| \nabla x_0 \| \leq L \quad \text{for some constant } L \in \mathbb{R}
\]
\end{theorem}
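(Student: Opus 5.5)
The plan is to write the deterministic DDIM sampler as a composition of $T$ explicit maps and propagate a Lipschitz constant through the composition. I take the assumption to mean that, for each $t$, the noise predictor $\epsilon_\theta(\cdot,t):\mathbb{R}^{u\times v}\to\mathbb{R}^{u\times v}$ is Lipschitz with constant $K_t$. With $\eta=0$, one DDIM step is
\begin{equation*}
x_{t-1} = \Phi_t(x_t) := \sqrt{\bar\alpha_{t-1}}\,\frac{x_t-\sqrt{1-\bar\alpha_t}\,\epsilon_\theta(x_t,t)}{\sqrt{\bar\alpha_t}} + \sqrt{1-\bar\alpha_{t-1}}\,\epsilon_\theta(x_t,t),
\end{equation*}
which is an affine combination of the identity and $\epsilon_\theta(\cdot,t)$ with fixed scalar coefficients.

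\textbf{Step 1: each step is Lipschitz.} By the triangle inequality, $\Phi_t$ is Lipschitz with constant
\[
C_t = \sqrt{\bar\alpha_{t-1}/\bar\alpha_t} + K_t\,\bigl|\sqrt{1-\bar\alpha_{t-1}} - \sqrt{\bar\alpha_{t-1}(1-\bar\alpha_t)/\bar\alpha_t}\bigr|.
\]
Every coefficient here is finite because $\bar\alpha_t\in(0,1)$.

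\textbf{Step 2: compose.} The sampler output is $x_0=\Phi_1\circ\cdots\circ\Phi_T(x_T)$, so $x_0$ is Lipschitz in $x_T$ with constant $\prod_{t=1}^T C_t$. This gives the first claim. The word "locally" only matters if one assumes local Lipschitzness of the UNet; in that case I would restrict $x_T$ to a bounded ball and note that each $\Phi_t$ maps bounded sets to bounded sets, so the argument carries through.

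\textbf{Step 3: bound the spatial gradient.} This is the main obstacle, because Lipschitzness in $x_T$ does not by itself control pixel-to-pixel differences of $x_0$. I would handle it in two parts.
\begin{itemize}
\item \emph{Boundedness of $x_0$.} Lipschitzness gives $\|x_0\|\le \|\Phi(0)\| + \bigl(\prod_t C_t\bigr)\|x_T\|$. Since $x_T$ is Gaussian, it lies in a ball of radius $R$ with overwhelming probability; on a bounded domain $R$ is a genuine bound.
\item \emph{From boundedness to a gradient bound.} The spatial gradient is a finite-difference operator $D$ on $\mathbb{R}^{u\times v}$, for example $(Dx)_{i,j}=(x_{i+1,j}-x_{i,j},\,x_{i,j+1}-x_{i,j})$. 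It is linear with $\|D\|\le 2\sqrt2$. Hence $\|\nabla x_0\|=\|Dx_0\|\le 2\sqrt2\,\bigl(\|\Phi(0)\|+\prod_t C_t\,R\bigr)=:L$.
\end{itemize}
A sharper argument would use $D\circ\Phi$ directly. By the chain rule, $\|D x_0 - D x_0'\|\le \|D\|\prod_t C_t\|x_T-x_T'\|$, so the gradient map is itself Lipschitz in the noise. Evaluating at a reference noise gives the bound.

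I expect the delicate point to be making $L$ a true constant rather than a high-probability one. I would either restrict to bounded $x_T$ (truncated noise, which is standard in practice) or note that for a fixed $x_T$ the bound is an explicit finite number depending only on $T$, $\{\bar\alpha_t\}$, $\{K_t\}$, and $R$.
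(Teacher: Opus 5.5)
Your proof is correct, and it takes a different route from the paper's.

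\textbf{The paper's route.} The appendix states the hypothesis as a \emph{spatial} condition, $\|\nabla \epsilon_\theta(x,t)\| \le L_{\theta,t}\|\nabla x\|$, rather than as ordinary Lipschitz continuity in the input. It then propagates the image gradient itself through the sampler. Each DDIM step is a fixed scalar combination of $x_t$ and $\epsilon_\theta(x_t,t)$, and $\nabla$ is linear, so each step enlarges the spatial gradient by at most a fixed factor. Composing gives $L=\prod_t L_{\theta,t}L_{\epsilon,t}$. What this actually yields is $\|\nabla x_0\| \le L\,\|\nabla x_T\|$. The paper reads this as $\|\nabla x_0\|\le L$ without controlling $\|\nabla x_T\|$, which is random and unbounded for Gaussian noise. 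The first claim, Lipschitz dependence on $x_T$, is asserted rather than derived, and the spatial condition does not imply input-space continuity.

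\textbf{Your route.} You assume input-space Lipschitzness, which gives the first claim directly through $\prod_t C_t$. You then get the gradient bound from boundedness of $x_0$ on a noise ball together with $\|D\|\le 2\sqrt2$. The dependence of your $L$ on the radius $R$ is unavoidable rather than a weakness. With $\epsilon_\theta\equiv 0$, which satisfies either form of the assumption, the sampler returns $x_0=x_T/\sqrt{\bar\alpha_T}$, so no bound uniform over all $x_T$ exists.

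\textbf{Trade-off.} The difference is mainly interpretive. The paper's gradient-to-gradient route at least encodes the intuition behind its corollary: sampling cannot amplify pixel-to-pixel variation beyond a fixed factor. Your bound $2\sqrt2\,\|x_0\|$ holds for every bounded image, including one with a hard step edge. So it establishes the theorem as stated but says nothing specific about smooth boundaries.
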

\paragraph{Proof sketch.} \textit{Given the assumption that diffusion models are Lipschitz, and the scalar operations in DDIM are also Lipschitz with finite input parameters and scalar operations, their composition as the diffusion process is also Lipschitz with some constant $L$.}

\begin{corollary}
As a result of Lipschitz continuity throughout the DDIM sampling process, the generated image in theory cannot exhibit arbitrarily sharp spatial transitions. Therefore, DDIM outputs are smooth, with softly transitioned object boundaries.
\end{corollary}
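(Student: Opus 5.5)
The plan is to prove the Corollary by reading it as a pointwise statement about the discrete image gradient. Let $D$ denote the finite-difference operator that maps an image to its neighboring-pixel differences, $(Dx)_{i,j} = (x_{i+1,j}-x_{i,j},\, x_{i,j+1}-x_{i,j})$. $D$ is linear with $\|D\|\le 2\sqrt{2}$ in the Euclidean norm, so any bound of the form $\|Dx_0\|\le L$ implies that no single pixel-to-pixel jump can exceed $L$. That is precisely the statement that the output cannot exhibit arbitrarily sharp transitions. The corollary should therefore follow from Theorem~\ref{th:continuous} once we check that the theorem really supplies such a uniform $L$.

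\textbf{Step 1: write DDIM as a composition of Lipschitz maps.} Write deterministic DDIM as $x_0 = \Phi_1\circ\cdots\circ\Phi_T(x_T)$, where
\[
\Phi_t(x)=\sqrt{\bar\alpha_{t-1}}\,\frac{x-\sqrt{1-\bar\alpha_t}\,\epsilon_\theta(x,t)}{\sqrt{\bar\alpha_t}}+\sqrt{1-\bar\alpha_{t-1}}\,\epsilon_\theta(x,t).
\]
Suppose $\epsilon_\theta(\cdot,t)$ is $K_t$-Lipschitz and $\bar\alpha_t>0$. Then $\Phi_t$ is Lipschitz with constant $c_t = a_t + b_t K_t$, where $a_t,b_t$ are finite functions of the schedule. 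Hence $x_0$ is Lipschitz in $x_T$ with constant $\prod_t c_t$, which is the first claim of Theorem~\ref{th:continuous}.

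\textbf{Step 2: bound the spatial gradient of the output.} Lipschitz continuity with respect to $x_T$ does not by itself control $Dx_0$. It has to be combined with the fact that the noise is drawn from a bounded set. Writing
\[
\|Dx_0\| \le \|D\|\bigl(\|x_0(x_T)-x_0(0)\|+\|x_0(0)\|\bigr) \le \|D\|\Bigl(\prod_t c_t\,\|x_T\|+\|x_0(0)\|\Bigr),
\]
a constant $L$ follows once $\|x_T\|\le R$. One can restrict to a high-probability ball, since $\|x_T\|\approx\sqrt{uv}$ with overwhelming probability.

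\textbf{Step 3: interpret the bound as soft boundaries.} An object boundary is a pixel pair whose depths differ by the gap $\Delta$ between foreground and background. Step 2 forbids $\Delta > L$. Intermediate values must then appear whenever the true gap exceeds the bound, which is the depth-bleeding phenomenon.

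The main obstacle is Step 3. The constant $L$ coming out of $\prod_t c_t$ is astronomically large, growing exponentially in $T$, so the bound is only qualitative. "Cannot be arbitrarily sharp" is rigorous, but "outputs are smooth" is heuristic. To make the argument meaningful, I would try to sharpen the estimate using smoothness of the UNet in its spatial argument, for example the bounded convolution kernels and upsampling. The idea is to bound the spatial Lipschitz constant of each $\Phi_t$ directly, relative to spatial differences of its input, rather than bounding the input-output constant. Failing that, I would state the corollary explicitly as the weaker "no arbitrarily sharp transitions" claim and defer smoothness to the empirical gradient study.
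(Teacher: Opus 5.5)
Your proposal is sound, but it reaches the statement by a different route than the paper. The paper reads the corollary directly off Theorem~\ref{th:continuous}. That proof posits a \emph{spatial} Lipschitz property $\|\nabla\epsilon_\theta(x,t)\|\le L_{\theta,t}\|\nabla x\|$, in which the output gradient is controlled by the input gradient. It notes that the affine DDIM update preserves this, composes the per-step constants into $L=\prod_t L_{\theta,t}L_{\epsilon,t}$, and then simply asserts the soft-boundary interpretation.

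You instead use ordinary input-to-output Lipschitz continuity and turn it into a gradient bound through $\|Dx_0\|\le\|D\|\,\|x_0\|$, together with a bound on $\|x_T\|$. Your route gains rigor under a standard assumption, and it makes explicit a step the paper skips. Composing the paper's spatial bound actually yields $\|\nabla x_0\|\le L\,\|\nabla x_T\|$, which is a constant only after the Gaussian noise is restricted to a bounded set, exactly as in your Step 2. The paper's homogeneous assumption also has a literal-reading problem: it forces spatially constant inputs to give constant outputs, which zero padding or positional encodings violate.

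What your route gives up is any use of convolutional locality. Your bound amounts to ``bounded values imply bounded finite differences,'' so it holds just as well for ground-truth depth images or for 3D-rectified outputs. It therefore cannot support the ``smooth'' half of the statement or the 2D-versus-3D contrast drawn in Corollary~\ref{cor:3D}. The paper leans on the gradient-to-gradient formulation for that, and the sharpening you propose at the end is precisely the paper's assumption.

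Both routes end with a product of $T$ per-step constants, which is astronomically large. So your assessment that ``outputs are smooth'' is only heuristic applies equally to the paper, which asserts that conclusion rather than deriving it.
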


\begin{figure}
  \centering
  \includegraphics[scale=0.6]{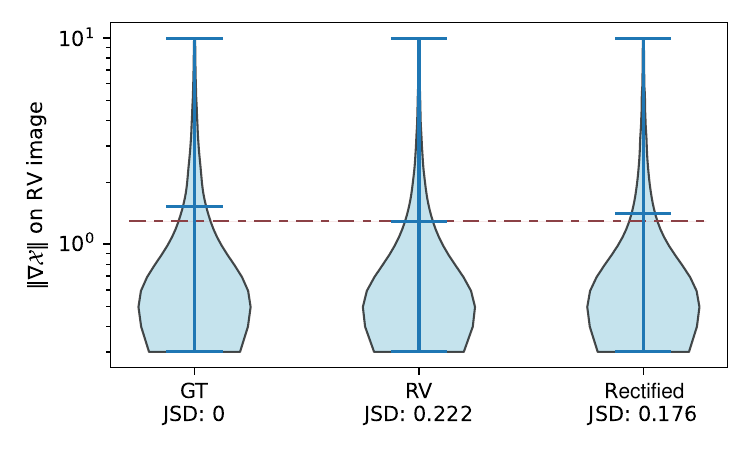}
  \vspace{-0.4cm}
  \caption{\textbf{Empirical validation of Theorem \ref{th:continuous}.} The graph shows the distribution of $\|\nabla x\|$ for GT, vanilla RV diffusion, and our rectified RV, including the corresponding Jensen-Shannon Divergence (JSD) \textit{w.r.t.} the GT.}
  \label{fig:violin}
\end{figure}


\begin{corollary}
\label{cor:3D}
A vital prerequisite for Theorem \ref{th:continuous}, spatial Lipschitz continuity measured in RV image, does not hold when the model itself is a 3D model which accepts back-projected point cloud from an RV image. While 3D models are still generally Lipschitz, the spatial proximity of a point is defined in 3D rather than 2D, adding an additional dimension of limitation. WLOG, we denote the Lipschitz constant for the 3D RRN model as $L_{3D}$ and an RV image rectified with a 3D network as $x_{3d}$. Suppose two adjacent pixels before 3D rectification have a depth difference of $\Delta d$, then we have $\|\nabla x_{3d}\| \leq L_{3D} \times \Delta d$ which is unbounded. We interpret that if $\Delta d$ is large enough, then these two adjacent pixels are out of the effective receptive field (ERF) in 3D operations such as Sparse Convolution or Local Attention, and therefore can generate arbitrarily sharp image borders.
\end{corollary}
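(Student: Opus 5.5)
The plan is to make precise the two ingredients the corollary contrasts: \emph{which metric} Lipschitz continuity is measured in, and how that metric changes between the 2D and 3D pipelines. Model the 3D rectification network (RRN) as a map $F$ acting on back-projected points $p_{ij} = d_{ij}\,\omega_{ij}\in\mathbb{R}^3$. Here $\omega_{ij}$ is the fixed unit ray direction of pixel $(i,j)$ and $d_{ij}$ is its depth, and the rectified depth is $x_{3d}(i,j) = \|F(p_{ij})\|$, or more generally a Lipschitz readout of $F(p_{ij})$. The only assumption we import, mirroring Theorem \ref{th:continuous}, is that $F$ is $L_{3D}$-Lipschitz with respect to the Euclidean distance \emph{in 3D}, so that $\|F(p)-F(q)\|\le L_{3D}\|p-q\|$.

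\textbf{Upper estimate.} For horizontally or vertically adjacent pixels $(i,j),(i',j')$, write the discrete spatial gradient as a finite difference $|x_{3d}(i,j)-x_{3d}(i',j')|$. Apply the reverse triangle inequality, then Lipschitzness of $F$, to get $|x_{3d}(i,j)-x_{3d}(i',j')|\le L_{3D}\|p_{ij}-p_{i'j'}\|$. Then bound
\[
\|p_{ij}-p_{i'j'}\|\le |d_{ij}-d_{i'j'}| + d_{i'j'}\|\omega_{ij}-\omega_{i'j'}\| .
\]
The angular term is at most $d_{\max}\,\delta\theta$, where $\delta\theta$ is the fixed angular resolution, so it is a bounded constant. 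It can be absorbed, which gives $\|\nabla x_{3d}\|\lesssim L_{3D}\,\Delta d$ up to an additive constant. This is the stated inequality.

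\textbf{Unboundedness.} Since $\Delta d$ ranges over $[0,d_{\max}]$ and is not controlled by pixel adjacency, the right-hand side carries no constant independent of the input. This is in contrast to Theorem \ref{th:continuous}, where the 2D UNet is Lipschitz in pixel coordinates and the pixel distance between neighbours is $1$. To show the bound is not merely an artefact of the estimate, I would then make the ERF argument concrete. For sparse convolution with kernel radius $r$ and depth $k$, the output at $p$ depends only on input points within distance $kr$ of $p$. If $\|p_{ij}-p_{i'j'}\|>2kr$, the two outputs are computed from disjoint neighbourhoods. The network can then assign them independent offsets, for example keeping a foreground point fixed while pushing a bleeding point onto the background surface. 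This yields a depth jump of size comparable to $\Delta d$ between adjacent pixels, so the 3D model can realise arbitrarily sharp borders as $\Delta d$ grows.

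\textbf{Main obstacle.} The hard step is this last one. Lipschitzness only gives an upper bound, so showing that sharp edges are actually \emph{attainable} requires an existence construction, not an inequality. I would first try exhibiting an explicit Lipschitz $F$ that is locally constant on each cluster, namely a projection onto a local plane fit. I would then check that it reproduces the GT step edge whenever the clusters are separated by more than the ERF radius.
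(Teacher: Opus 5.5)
Your proposal is correct and follows essentially the same route as the paper, which offers no argument beyond the corollary's own text. Lipschitzness of the RRN in the 3D metric turns the adjacent-pixel difference into $L_{3D}$ times a 3D distance of order $\Delta d$, and the ERF remark supplies the sharp-border interpretation. Your version is actually more careful than the paper's: you keep the angular term $L_{3D}d_{\max}\delta\theta$ that the paper drops (without it the stated inequality fails at $\Delta d=0$), and you rightly note that an upper bound cannot show that sharp edges are attainable. One slip: since $\Delta d\le d_{\max}$, the right-hand side is bounded by the input-independent constant $L_{3D}d_{\max}(1+\delta\theta)$, so your claim that it carries no constant independent of the input is literally false. ``Unbounded'' should instead be read as ``scaling with the input's own depth jump rather than capped by a fixed small constant.''
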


\begin{figure*}
  \centering
  \includegraphics[width=0.75\linewidth]{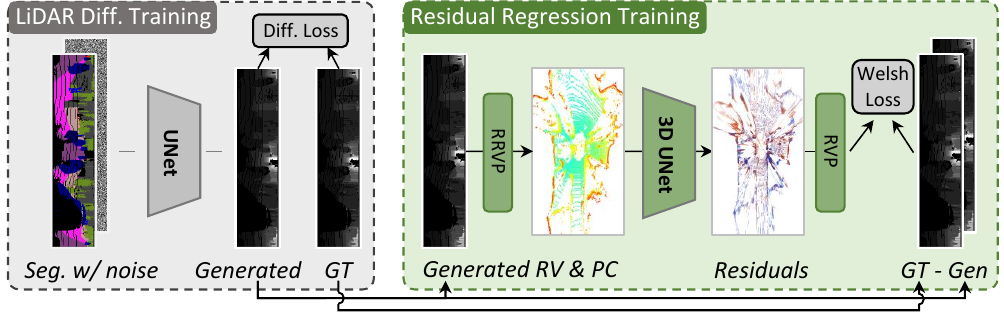}
  \vspace{-0.2cm}
  \caption{\textbf{The training pipeline} of the proposed L3DR framework.
  In the LiDAR diffusion training stage, generated and ground-truth point cloud pairs are collected using semantic-conditioned LiDAR diffusion. In the residual regression training stage, such data pairs are employed to train a 3D network to remove RV artifacts present in the residuals to improve generation quality.}
  \label{fig:architecture}
  \vspace{-0.3cm}
\end{figure*}

\subsection{Empirical Analysis}

To validate that 3D models generate sharper borders than image diffusion models, we plot the empirical distribution of image gradient length $\|\nabla x\|$ for the GT point clouds, vanilla LiDM RV outputs, and our rectified LiDM outputs in Figure \ref{fig:violin}. We remove the dominant $\|\nabla x\| \leq 0.3m$ on planar regions and rare $\|\nabla x\| \geq 10m$ which exceed network ERFs, so that we can compare the remaining geometry-related gradients. According to the figure, range view generated images have lower $\|\nabla x\|$ compared to the ground truth due to the inherent smoothness of diffusion-generated images. However, we are able to restore such sharp borders with the rectification network, which increases overall average image sharpness and reduces JSD from 0.222 to 0.176. We conclude that 3D models generate sharper object borders than 2D models, thus being preferable for rectification of 2D image diffusion results.
\section{Method}
\label{sec:method}

Figure~\ref{fig:architecture} shows the L3DR framework which involves two training stages. In the LiDAR diffusion training, we leverage the RV representation to train a LiDAR diffusion model with conditional semantic input. The diffusion model generates RV point clouds and the corresponding ground-truth. In the residual regression training, we apply such data pairs to train a 3D residual regression network (RRN) that rectifies RV artifacts under the supervision of Welsch Loss.

\begin{figure*}
  \centering
  \includegraphics[width=0.9\linewidth]{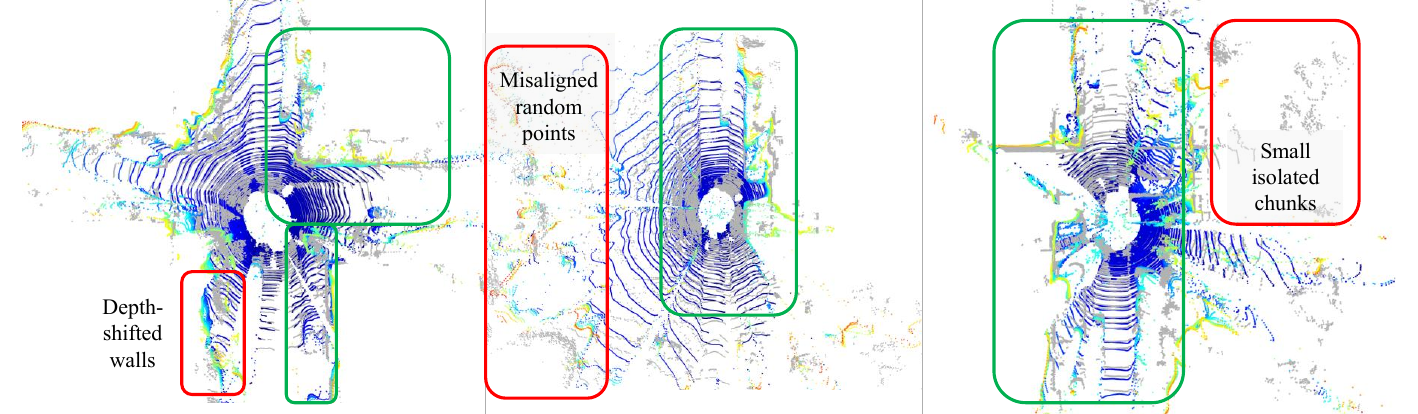}
  \vspace{-0.2cm}
  \caption{\textbf{Visualization of two types of errors in RRN training data.} While the generated point clouds (\textcolor{orange!80!black}{c}\textcolor{yellow!85!black}{o}\textcolor{green!80!black}{l}\textcolor{cyan!90!black}{or}\textcolor{blue}{ed}) approximate the GT (\textcolor{gray}{gray}) in most of the regions with \textbf{high-variance errors}, i.e., RV artifacts as highlighted with green dotted lines, there are also regions with \textbf{high-bias errors} which impede training, including (1) shifted walls, (2) random points on the leaves where laser hits are hard to predict, and (3) isolated chunks with consistent depth error. These bias-dominated regions are harmful for RRN training.}
  \label{fig:labels}
\end{figure*}

\begin{figure}
  \centering
  \includegraphics[width=\linewidth]{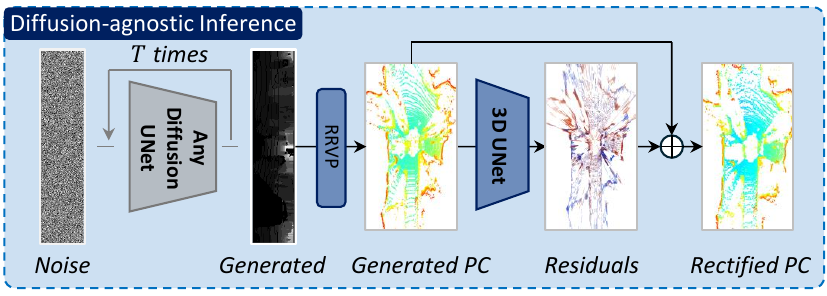}
  \caption{\textbf{The inference pipeline} of the proposed L3DR.
  }
  \label{fig:inference}
\end{figure}

\subsection{Range View Projection}
\label{sec:rvp}
Range view projection (RVP) and reverse range view projection (RRVP) are the key components that connects RV images with point clouds. Given a point cloud $ P \in \mathbb R^{N\times 3}$ consisting of $N$ points in Euclidean coordinates and a range-view depth image $x\in \mathbb{R}^{u\times v}$, the range view projection converts a point $(p_x,p_y,p_z)^\intercal \in P$ into a pixel $(u_i, v_i)^\intercal$ with depth value $d_i$:

\begin{align}
    &\begin{aligned}
    \begin{pmatrix} u_i\\
    v_i\\
    \end{pmatrix}&=
    \begin{pmatrix}
        \sigma_u \arctan\left(p_z \Big/ \sqrt{(p_x^2+p_y^2)}\right),\\
        -\sigma_v \arctan(p_x / p_y),\\
    \end{pmatrix}\\
    d_i&=\sqrt{(p_x^2 + p_y^2 + p_z^2)}\\
    \end{aligned}
\end{align}
Where $\sigma_u, \sigma_v$ are pixels per radian in elevation and azimuth directions, respectively. They are usually anisotropic, \emph{i.e.}, the vertical resolution is sparse while horizontal resolution is dense. For LiDARs with non-equally angled lasers, $\sigma_u$ can be a function of elevation.
Similarly, the reverse range view projection converts a pixel $(u_i, v_i)$ with depth $d_i$ into a point $(p_x,p_y,p_z)^\intercal$: 
\begin{align}
    \begin{pmatrix}
                p_x \\
                p_y \\
                p_z
                \end{pmatrix}=
    \begin{pmatrix}
    d_i \cos(u_i / \sigma_u) \cos(v_i / \sigma_v),\\
    -d_i \cos(u_i / \sigma_u) \sin(v_i / \sigma_v),\\
    d_i \sin(u_i / \sigma_u).
    \end{pmatrix},
\end{align}

Note that RRVP is lossless but RVP is lossful, because multiple points may be projected to the same pixel. Empirically, RVP is lossless when the point cloud projection structure is preserved and RV image size is equal to or larger than the laser angular resolution.

\subsection{LiDAR Diffusion Training}
In order to generate ground-truth and diffusion-generated point cloud pairs for the following training stage, we re-train a state-of-the-art conditional LiDAR diffusion model, LiDM \citep{ran2024towards}, on KITTI, nuScenes, and WOD to generate LiDAR point clouds based on segmentation map as a condition. Specifically, the RV image is first compressed with a VQ-VAE \citep{van2017neural}, then a classical diffusion UNet is leveraged to predict Gaussian noise added in the compressed latent space. To enable conditional generation, a down-sampled segmentation color map is appended to latent space as control input. 
We collect the ground-truth RV $x_{gt}$ and corresponding generated $x_{gen}$ after convergence to provide training data for the next stage.

We highlight the importance of LiDM due to its capability to generate point clouds that match closely with ground truths, and yet remain imbued with RV artifacts on a limited scale, as depicted in Figure \ref{fig:labels}.
However, we also highlight that our framework is general and not restricted to LiDM, given that an alternative LiDAR diffusion method can generate such closely approximated point cloud pairs.

\subsection{Residual Regression Training}
\paragraph{Pipeline.} After obtaining a generated point cloud, we feed it into a 3D backbone named Residual Regression Network (RRN) to accurately regress and cancel out the hidden RV artifacts to achieve better geometry-realism. Specifically, we first obtain the generated point cloud $P_{gen}=\mathrm{RRVP}(x_{gen})$ through reverse range view projection. Then, $P_{gen}$ is fed into a 3D backbone $\mathbf F: \mathbb R^{N\times k} \rightarrow \mathbb R^{N\times 3}$ to obtain 3D offsets $O=\mathbf F(P_{gen})$. For a normal unconditional 3D RRN, $k=3$ which represents the input point coordinates; otherwise, $k=6$ which accepts both the coordinates and a semantic color map. Finally, we project the output offsets onto the radial directions of the original point cloud to obtain the final residuals $\hat O=P_{gen}\mathrm{diag}(P_{gen}O^\intercal) \big/ \sqrt{\mathrm{diag}(P_{gen}P_{gen}^\intercal)}$.

\paragraph{Decomposition of the learning objective.} From a bias–variance decomposition viewpoint, bias measures the consistent deviation between the diffusion-generated depth and the ground truth, while variance captures variable RV artifacts, as depicted in Figure \ref{fig:labels}. High bias errors naturally emerge in large chunks sharing the same segmentation class, \textit{i.e.}, insufficient semantic constraints introduce unrealistic yet coherent hallucinations such as a flat wall being interpreted as slanted or a tree as distant rather than nearby. These errors are particularly harmful to geometry correction since the residuals stem from biased interpretation of the scene rather than stochastic RV artifact error. More discussion is given in section \ref{sec:regress_all}.
\label{sec:decomposition_of_objective}

Although high-bias and high-variance errors often coexist, RV artifacts possess much smaller error magnitude compared to high-bias errors. This distinction enables a soft separation of bias- and variance-dominated regions based on residual magnitude, which inspires the introduction of the Welsch loss to suppress large, bias-driven deviations.

\paragraph{Loss function.} After obtaining the model output and GT, we propose Welsch Loss to remove the effect of erratic high-bias areas in training data to focus on rectifying high-variance RV artifacts.
Specifically, Welsch's Function is defined as:
\begin{equation}
    \psi_\nu(x)=1-\exp\left(-\frac{x^2}{(2\nu ^2)}\right)
\end{equation}
Which is an upside-down bell curve and $\nu>0$ is a width parameter. We then define the RRN loss function as:
\begin{equation}
    L_{RRN}=\mathrm{mean}\Big(\psi_\nu \big(\mathrm{RVP}(P_{gen}+\hat O) - x_{gt}\big)\Big)
\end{equation}
Where $\mathrm{RVP}(\cdot)$ denotes range view projection, and $\mathrm{mean}(\cdot)$ is the average operation. We choose the suitable $\nu$ for all datasets and fix them for other experiments.

\begin{table*}[t]
  \centering
  \resizebox{0.75\linewidth}{!}{
  \begin{tabular}{cccccc}
    \toprule
&& \multicolumn{2}{c}{\textbf{Perceptual}} & \multicolumn{2}{c}{\textbf{Distributional}}\\
\cmidrule(lr){3-4} \cmidrule(lr){5-6}
Task&Method&FSVD$\downarrow$	&FPVD$\downarrow$	&JSD$\downarrow$	&MMD$\times 10^{(-4)}\downarrow$ \\
\midrule
\multirow{10}{*}{\specialcell{KITTI360\\Unconditional}} &LiDARGAN	    \citep{caccia2019deep} &183.4	&168.1	&0.272	&4.74\\
&LiDARVAE	    \citep{caccia2019deep}	&129.9	&105.8	&0.237	&7.07\\
&ProjectedGAN	\citep{sauer2021projected}	&44.7	&33.4	&0.188	&\textbf{2.88}\\
&UltraLiDAR	    \citep{xiong2023learning}	&72.1	&66.6	&0.747	&17.12\\
\cmidrule{2-6}
&LiDARGen(1160s)	\citep{zyrianov2022learning}	&39.2	&33.4	&0.188	&\textbf{2.88}\\
&LiDARGen(50s)	\citep{zyrianov2022learning} &480.6	&400.7	&0.506	&9.91\\
&LDM(50s)	    \citep{rombach2022high}	&70.7	&61.9	&0.236	&5.06\\
&\cellcolor{lightgray!30}R2DM(50s)	    \citep{nakashima2024lidar}	&\cellcolor{lightgray!30} 36.8	&\cellcolor{lightgray!30} 30.9	&\cellcolor{lightgray!30} \underline{0.168}  &\cellcolor{lightgray!30} \underline{2.92}\\
&\cellcolor{lightgray!30}LiDM(50s)	    \citep{ran2024towards}	&\cellcolor{lightgray!30}38.8	&\cellcolor{lightgray!30}29.0	&\cellcolor{lightgray!30}0.211	&\cellcolor{lightgray!30}3.84\\
&\cellcolor{lightgray!50} Ours-R2DM	   & \cellcolor{lightgray!50}\underline{35.9} & \cellcolor{lightgray!50}\underline{28.2} & \cellcolor{lightgray!50}\textbf{0.165} & \cellcolor{lightgray!50}2.90\\
&\cellcolor{lightgray!50} \textcolor{cyan!90!black}{$\Delta$ \textit{Increment w.r.t. R2DM}} & \cellcolor{lightgray!50}\textcolor{cyan!90!black}{+2.4\%} & \cellcolor{lightgray!50}\textcolor{cyan!90!black}{+8.7\%} & \cellcolor{lightgray!50}\textcolor{cyan!90!black}{+1.8\%} & \cellcolor{lightgray!50}\textcolor{cyan!90!black}{+0.7\%}\\

&\cellcolor{lightgray!50} Ours-LiDM	        	&\cellcolor{lightgray!50}\textbf{35.8}	&\cellcolor{lightgray!50}\textbf{26.1}	&\cellcolor{lightgray!50}0.182 &\cellcolor{lightgray!50}3.27\\
&\cellcolor{lightgray!50} \textcolor{cyan!90!black}{$\Delta$ \textit{Increment w.r.t. LiDM}} &\cellcolor{lightgray!50} \textcolor{cyan!90!black}{+7.7\%}	&\cellcolor{lightgray!50}\textcolor{cyan!90!black}{+10.0\%}	&\cellcolor{lightgray!50}\textcolor{cyan!90!black}{+13.7\%}	&\cellcolor{lightgray!50}\textcolor{cyan!90!black}{+14.8\%}\\
    \midrule
    \midrule
    \multirow{3}{*}{\specialcell{nuScenes\\Conditional}}    &\cellcolor{lightgray!30}LiDM \citep{ran2024towards}	&\cellcolor{lightgray!30}86.6	&\cellcolor{lightgray!30}74.8	&\cellcolor{lightgray!30}0.145	&\cellcolor{lightgray!30}2.81\\
    & \cellcolor{lightgray!50}Ours-LiDM-Sem	 &\cellcolor{lightgray!50}\textbf{81.3}	&\cellcolor{lightgray!50}\textbf{67.0}	&\cellcolor{lightgray!50}\textbf{0.133}	&\cellcolor{lightgray!50}\textbf{2.72}\\
    & \cellcolor{lightgray!50}\textcolor{cyan!90!black}{$\Delta$ \textit{Increment w.r.t. LiDM}}  &\cellcolor{lightgray!50}\textcolor{cyan!90!black}{+6.12\%}	&\cellcolor{lightgray!50}\textcolor{cyan!90!black}{+10.43\%}	&\cellcolor{lightgray!50}\textcolor{cyan!90!black}{+8.28\%}	&\cellcolor{lightgray!50}\textcolor{cyan!90!black}{+3.20\%}\\

    \midrule
    \multirow{3}{*}{\specialcell{Waymo\\Conditional}}       &\cellcolor{lightgray!30}LiDM \citep{ran2024towards}	&\cellcolor{lightgray!30}21.4	&\cellcolor{lightgray!30}21.9	&\cellcolor{lightgray!30}0.104	&\cellcolor{lightgray!30}1.30\\
    & \cellcolor{lightgray!50}Ours-LiDM-Sem	&\cellcolor{lightgray!50}\textbf{18.3}	&\cellcolor{lightgray!50}\textbf{20.3}	&\cellcolor{lightgray!50}\textbf{0.086}	& \cellcolor{lightgray!50}\textbf{1.25}\\
    &\cellcolor{lightgray!50}\textcolor{cyan!90!black}{$\Delta$ \textit{Increment w.r.t. LiDM}} &\cellcolor{lightgray!50}\textcolor{cyan!90!black}{+14.49\%}	&\cellcolor{lightgray!50}\textcolor{cyan!90!black}{+7.31\%}	&\cellcolor{lightgray!50}\textcolor{cyan!90!black}{+17.31\%}	&\cellcolor{lightgray!50}\textcolor{cyan!90!black}{+3.85\%}\\

    \bottomrule
  \end{tabular}

  }
  \caption{Benchmarking of unconditional generation on KITTI360 and semantic-conditioned generation on nuScenes and Waymo.
  For the semantic-conditioned experiments, RRN takes segmentation map as additional input for optimal performance. Gray areas highlight direct comparisons with the baselines.}
  \label{tab:main}
\end{table*}

\subsection{Diffusion-agnostic Inference}
During inference, because the RRN has been well trained to remove RV artifacts, the diffusion model can be replaced with arbitrary LiDAR diffusion model, as depicted in Figure \ref{fig:inference}. Empirically, the RRN can generalize to arbitrary unseen unconditional diffusion networks even if the RRN is only trained on segmentation-conditioned data generated by LiDM. Specifically, during inference, we generate novel $x_{gen}'$ with arbitrary LiDAR diffusion model, project RV into a point cloud $P_{gen}'=\mathrm{RRVP}(x_{gen}')$, calculate the point wise residual $\hat O'=P_{gen}'diag(P_{gen}'\mathbf F(P_{gen}')^\intercal)\big/ \sqrt{\mathrm{diag}(P_{gen}'P_{gen}'^\intercal)}$, and obtain the final rectified point cloud $P_{ref}'=P_{gen}'+\hat O'$.
\section{Experiments}

\subsection{Main Results}

In this section, we first introduce our experiment setup, then report the generation metrics in Table \ref{tab:main}-\ref{tab:kitti_cond} and provide visualization in Figure \ref{fig:semantic_conditioned}-\ref{fig:unconditioned}. Module ablation and time analysis are listed in Table \ref{tab:ablation}-\ref{tab:time}. Ablation on other datasets and a parameter tuning experiment can be found in Appendix Sec.~\ref{ap:exp}.

\subsection{Experiment Setup}
\label{sec:experiment_setup}
\paragraph{Datasets.} We train and evaluate our method on SemanticKITTI \citep{Geiger2012CVPR,behley2019iccv}, KITTI360 \citep{Liao2022PAMI}, nuScenes \citep{Caesar_2020_CVPR}, and Waymo Open Dataset \citep{Sun_2020_CVPR}. 
All datasets are split into train-validation-test according to official recommendations. We generate RRN training data by conditional LiDM inference on training set conditions, \emph{i.e.}, segmentation maps, and report the metrics of the last-epoch model on validation set.

\begin{table*}[t]
  \centering
  \resizebox{0.75\linewidth}{!}{
  \begin{tabular}{ccccc|cccc}
    \toprule
& \multicolumn{4}{c}{\textbf{Semantic-Map-Conditioned}} & \multicolumn{4}{c}{\textbf{Front-Image-Conditioned}}\\
\cmidrule(lr){2-5} \cmidrule(lr){6-9}
Method&FSVD$\downarrow$	&FPVD$\downarrow$	&JSD$\downarrow$	&MMD$\times 10^{(-4)}\downarrow$ &FSVD$\downarrow$	&FPVD$\downarrow$	&JSD$\downarrow$	&MMD$\times 10^{(-4)}\downarrow$\\
\midrule
LiDARGen	&31.7	&30.1	&0.130	&5.18   & -	    & -	    & -     & -  \\
LDM     	&21.3	&20.3	&0.088	&3.73	&35.9	&26.5	&0.26	&3.80\\
\rowcolor{lightgray!30} LiDM	    &\underline{20.2}	&17.7	&\underline{0.072}	&3.16	&\textbf{32.5}	&\underline{25.8}	&\underline{0.21}	&\underline{3.69}\\

\rowcolor{lightgray!50} Ours   &20.4	&\underline{15.0}	&\textbf{0.070}	&\underline{1.69}	&\underline{33.6}	&\textbf{24.9}	&\textbf{0.18}	&\textbf{3.30}\\
\rowcolor{lightgray!50} \textcolor{cyan!90!black}{$\Delta$ \textit{Inc.}} & \textcolor{cyan!90!black}{-1.0\%}	&\textcolor{cyan!90!black}{+15.3\%}	&\textcolor{cyan!90!black}{+2.8\%}	&\textcolor{cyan!90!black}{+46.5\%} &\textcolor{cyan!90!black}{-3.4\%}	&\textcolor{cyan!90!black}{+3.5\%}	&\textcolor{cyan!90!black}{+12.2\%}	&\textcolor{cyan!90!black}{+10.6\%}\\

\rowcolor{lightgray!50} Ours-Sem   &\textbf{15.8}	&\textbf{12.9}	&\textbf{0.070}	&\textbf{1.50}  & -	    & -	    & -     & - \\
\rowcolor{lightgray!50} \textcolor{cyan!90!black}{$\Delta$ \textit{Inc.}} 
&\textcolor{cyan!90!black}{+21.8\%}	&\textcolor{cyan!90!black}{+27.1\%}	&\textcolor{cyan!90!black}{+2.8\%}	&\textcolor{cyan!90!black}{+52.5\%}
  & -	    & -	    & -     & - \\
    \bottomrule
  \end{tabular}
  }
  \caption{\textbf{Comparison of conditional LiDAR point cloud generation} on SemanticKITTI and KITTI360. Gray areas highlight direct comparisons with the baseline, LiDM. `Ours-Sem' denotes our method with segmentation input to the RRN.}
  \label{tab:kitti_cond}
\end{table*}

\begin{figure*}
  \centering
  \includegraphics[width=0.9\linewidth]{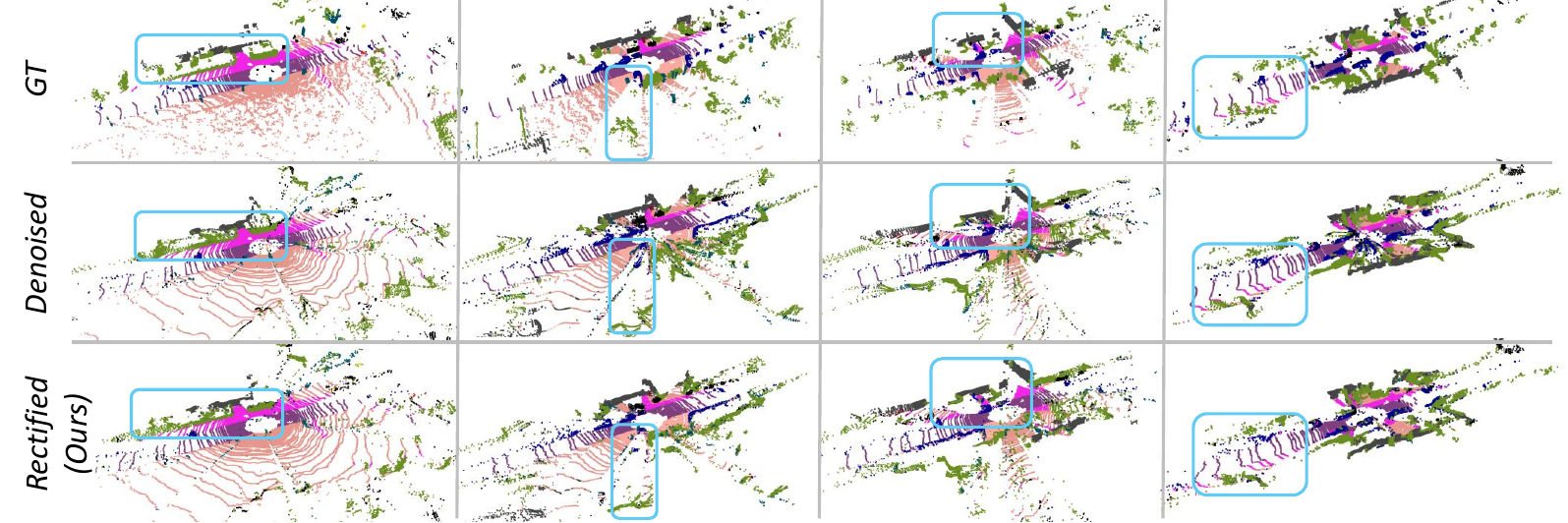}
  \vspace{-0.2cm}
  \caption{\textbf{Visualization of conditional generation on SemanticKITTI.} Cyan regions highlight the improved RV artifacts from the diffusion-generated (i.e., denoised) data to our rectified data, such as depth bleeding and wavy surfaces.}
  \label{fig:semantic_conditioned}
\end{figure*}

\paragraph{Metrics.} We report both perceptual metrics including the Fr\'echet Sparse Volume Distance (FSVD) and Fre\'chet Point Voxel Distance (FPVD) proposed by \cite{ran2024towards}, and conventional distributional metrics, Jensen-Shannon Divergence (JSD) and Minimum Matching Distance (MMD).

\begin{table*}[t]
  \centering
  \resizebox{0.7\linewidth}{!}{
  \begin{tabular}{lcccccc}
    \toprule
\multicolumn{3}{c}{\textbf{Configurations}} & \multicolumn{2}{c}{\textbf{Perceptual}} & \multicolumn{2}{c}{\textbf{Statistical}}\\
\cmidrule(lr){1-3} \cmidrule(lr){4-5} \cmidrule(lr){6-7}
Backbone & Loss & RRN Sem. Input&FSVD$\downarrow$	&FPVD$\downarrow$	&JSD$\times10^{(-2)}\downarrow$	&MMD$\times 10^{(-5)}\downarrow$\\
\midrule
/ (baseline) & / &- &18.3	&15.3	&7.1	&16.2\\
\midrule
SPUNet & Welsch & -  &\underline{16.4}	&12.1	&\textbf{6.7}	&16.7\\
SPUNet & MSE & -   &26.3	&25.1	&7.0	&\textbf{12.6}\\
SPUNet & Welsch & \checkmark &\textbf{12.5}	&\textbf{10.7}	&\textbf{6.7}	&\underline{15.0}\\
\midrule
PTV3 & Welsch &	 -   &17.7	&13.9	&\underline{6.8}	&17.2\\
PTV3 & MSE & -   &42.4	&42.6	&7.4	&18.8\\
PTV3 & Welsch & \checkmark   &\textbf{12.5}	&\underline{10.8}	&\textbf{6.7}	&15.8\\
\midrule
2D UNet & Welsch&  -  &19.2	&16.4	&7.1	&16.3\\
    \bottomrule
  \end{tabular}
  }
  \vspace{-0.2cm}
  \caption{\textbf{Ablation experiment} on SemanticKITTI, including RRN backbone structure, loss function, semantic-map input to RRN, and a fair baseline using a 2D image Unet instead of a 3D UNet.}
  \vspace{-0.4cm}
  \label{tab:ablation}
\end{table*}

\begin{table}[t]
  \centering
  \resizebox{\linewidth}{!}{
  \begin{tabular}{lcccc}
    \toprule
    Method & Network & \# Params (M) & Steps & Time (ms) \\
    \midrule
    R2DM & Eff-UNet & 31.10 &50 & 579.83 \\
    LiDM & VQ-VAE, UNet & 257.77 &50 & 557.36 \\
    +Ours & SPUNET & +37.90 & +1 & +19.65 \\
    \bottomrule
  \end{tabular}
  }
  \caption{\textbf{Computational overhead} on KITTI360. our method introduce very slight computational overhead over the baselines.}
  \label{tab:time}
  \vspace{-0.5cm}
\end{table}

\paragraph{Training.} Our diffusion model processes depth values of size without logarithmic scaling. For 64-laser KITTI and Waymo, we naturally designate the RV image size as $(64,1024)$. 
However, we discover that adopting a $(32,1024)$ image size on the 32-laser nuScenes leads to network divergence, and that LiDM \citep{ran2024towards} did not open-source its nuScenes config. We solve this problem by over-provisioning the RV image size to $(64, 1024)$ for nuScenes, which stabilizes training without affecting generation quality as discussed in Section \ref{sec:rvp}. We hypothesize that a large-enough image size is vital to training dynamics. All networks are trained with $4\times$ RTX 4090 24G up to 150 epochs.

\paragraph{KITTI unconditional generation.} We compare our L3DR method with existing LIDAR generation methods on KITTI unconditional generation task in upper half of Table \ref{tab:main}. Unconditional R2DM and LiDM are adopted as the first-stage diffusion model for L3DR. With the powerful capability of removing RV artifacts, L3DR sets a new state-of-the-art, achieving $35.8(\Delta7.7\%)$ FSVD, $26.1(\Delta10.0\%)$ FPVD, and $0.182(\Delta13.7\%)$ JSD, greatly surpassing the baseline method LiDM. Similar results are also observed when a non-latent diffusion method R2DM is applied, which signals the general applicability of our L3DR framework, achieving the best JSD of $0.165(\delta 1.8\%)$. While L3DR does not top the MMD metric, our method still provides a average $7.3\%$ improvement, and is comparable to the best-performing ProjectedGAN which scores $2.88$. We conclude that L3DR achieves state-of-the-art global photo- and geometry-realism on unconditional LiDAR generation task.

\paragraph{KITTI conditional generation.} We compare L3DR with conditional LIDAR generation methods on both SemanticKITTI and KITTI360, respectively in table \ref{tab:kitti_cond}. L3DR again achieved better performance compared to the baselines, gaining $15.0(\Delta15.3\%)$ FPVD, $0.07(\Delta2.8\%)$ JSD, and an astonishing $1.69\times 10^{-4}(\Delta46.5\%)$ MMD on SemanticKITTI, and $24.9(\Delta3.5\%)$ FPVD, $0.18(\Delta12.2\%)$ JSD, and $3.3\times 10^{-4}(\Delta10.6\%)$ MMD on KITTI360. Meanwhile, L3DR with semantic-map input improves generation quality consistently, providing further $10.2\%$ average performance boost on all metrics upon raw L3DR. We conclude that L3DR significantly improves conditional generation capability compared to the baselines.

\paragraph{Performance on other datasets.} We examine the semantic-conditioned generation metrics of our L3DR method on nuScenes and Waymo, in the lower half of Table \ref{tab:main}. L3DR exhibits consistent improvements on all datasets, improving all metrics by an average of $11.6\%$ and $7.0\%$ on nuScenes and Waymo conditional generation, respectively. We conclude that L3DR is widely applicable and sets new state-of-the-arts on the nuScenes and Waymo semantic-conditioned generation tasks.

\begin{figure*}[h!]
  \centering
  \includegraphics[width=0.76\linewidth]{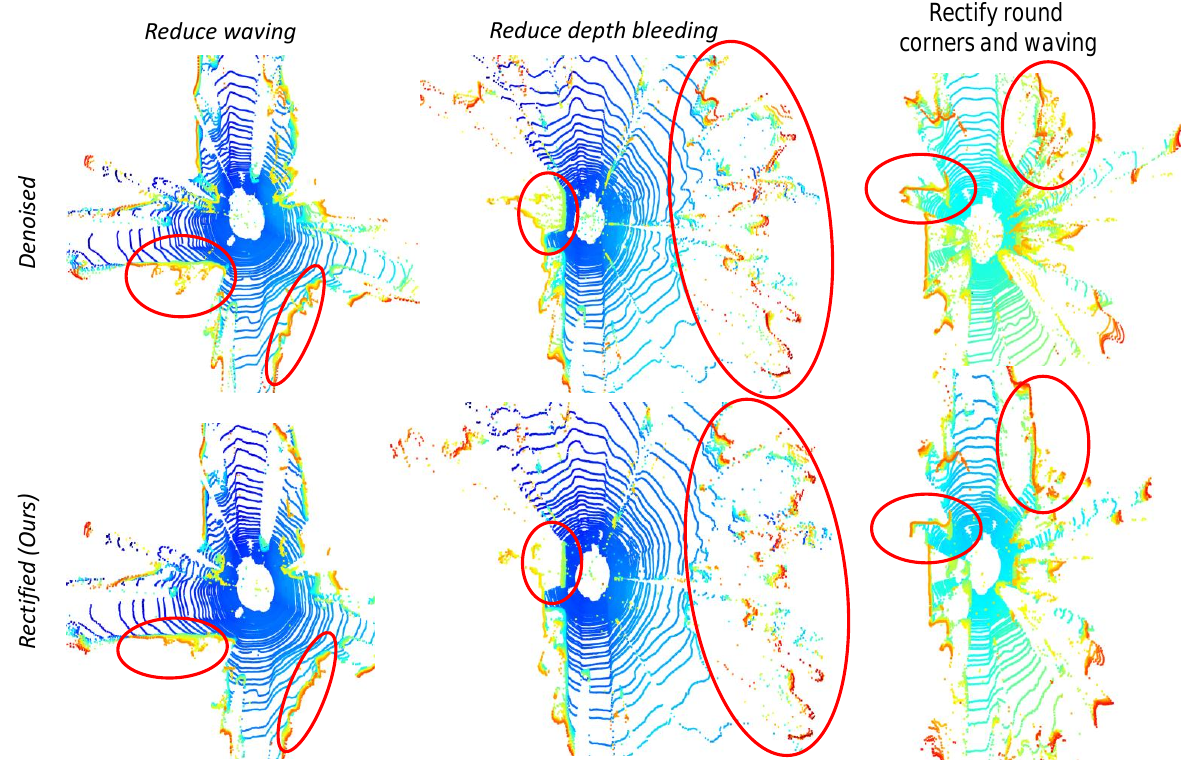}
    \vspace{-0.3cm}
  \caption{\textbf{Visualization of unconditional generation on KITTI360.} Red regions highlight the the improvements from the diffusion-generated (i.e., denoised) data to our rectified data.}
  \label{fig:unconditioned}
  \vspace{-0.2cm}
\end{figure*}

\subsection{Other Results}

\paragraph{Ablation.} We ablate various components of the L3DR framework, including the backbone architecture between SPUNET \citep{spconv2022} and PTV3 \citep{wu2024point}, loss choice, the usage of semantic-map input in RRN, and a 2D RRN instead of 3D RRN. We calculated metrics with voxelized GT instead of original GT for fast evaluation, which results in slightly different figures. As listed in Table \ref{tab:ablation}, applying MSE Loss generally deteriorates performance compared to baseline, with FSVD and FPVD almost doubling from $18.3, 15.3$ to $26.3, 25.1$ for SPUNET and $42.4, 42.6$ for PTV3. The best overall performance attributes to SPUNET with Welsch loss and semantic-map input. Additionally, substituting 3D UNet with a 2D Image Unet brings considerable degradation on all metrics. We default to SPUNET and Welsch Loss for all other experiments.

\paragraph{Time analysis.} 
We provide a simple time comparison to demonstrate the efficiency of L3DR framework. On an RTX 4090 GPU 24G, the additional time for RRN rectification is $19.65$ ms which is negligible compared to LiDM and R2DM sampling processes which both require more than $550$ ms. The 37.90 M additional parameters from RRN are also lightweight, especially when compared to LiDM. We conclude that the L3DR framework can greatly improve generation quality with negligible additional parameter and inference cost.

\vspace{-0.2cm}
\section{Conclusion}
\vspace{-0.2cm}
We have proposed L3DR, a 3D-aware LIDAR Diffusion and Rectification framework which combines 2D RV LiDAR diffusion with a 3D residual regression network to achieve both layout-realism and geometry-realism on conditional and unconditional LiDAR diffusion task. By combining our Welsch Loss with the training data generated from semantic-conditioned diffusion, we are able to train a powerful Residual Regression Network to remove RV artifacts such as depth bleeding and wavy surfaces. Our method exhibits state-of-the-art performance and broad applicability on different LiDAR diffusion methods, while requiring only negligible additional computational cost, making it a perfect choice for cost-efficient generation of high-fidelity novel LiDAR point clouds.

\section*{Acknowledgments}
This study is funded by the Ministry of Education Singapore, under
the Tier-2 project scheme with project number MOET2EP20123-0003.\\
{
    \small
    \bibliographystyle{ieeenat_fullname}
    \bibliography{main}
}

\clearpage
\setcounter{page}{1}
\maketitlesupplementary

\appendix

\section{Technical Appendices and Supplementary Material}

\subsection{Related Work on Non-RV 3D Diffusion}
3D based methods directly diffuse point coordinates on 3D or compressed 3D space instead of RV image space. Specifically, they have seen applications on object point clouds \citep{luo2021diffusion,melas2023pc2,tyszkiewicz2023gecco,zhou20213d,yin2025shapegpt}, mesh representation \citep{gupta20233dgen,liu2023meshdiffusion,zhuravlev2025denoising}, and neural implicit fields \citep{cheng2023sdfusion,erkocc2023hyperdiffusion,vahdat2022lion}.
However, due to the natural preference for local geometry rather than global structure, 3D diffusion has seen twists when applied to LiDAR point clouds.
UltraLiDAR \citep{xiong2023learning} circumvents the issue by encoding point clouds into a VQ-VAE feature map in BEV view. Such representation empowers accurate local geometry but disregards global self-occlusion, resulting in inaccurate projection relationships.
LiDiff \citep{nunes2024scaling} approaches point cloud completion by 3D diffusion on the point level, but fails on direct generation. While these methods share similarities with our residual regression network, we highlight that they are not designed to recover local geometric properties including the LiDAR projection structure, therefore being unsuitable for the 3D-aware LiDAR generation task.

\subsection{Full Proof of Theorem \ref{th:continuous}}
\label{ap:proof}
Without loss of generality (WLOG), we choose DDIM \citep{song2020denoising} as the analyze target, as it is of general form and has been widely applied. We then continue to derive a constant bound for the image gradient in DDIM-sampled images, and show that such bound for a 3D network is much looser.

\begin{assumption}
Let $\epsilon_\theta: \mathbb{R}^{u\times v} \times \mathbb{R} \rightarrow \mathbb{R}^n$ denote the noise prediction network used in DDIM, where $\epsilon_\theta(x_t, t)$ predicts noise at time step $t$. We assume $\epsilon_\theta$ is spatially Lipschitz continuous: 
\begin{equation}
    \label{eq:lipschitz}
    \begin{split}
    \|\nabla \epsilon_\theta(x, t) \| \leq L_{\theta,t} \| \nabla x \|,\quad \forall x \in \mathbb{R}^{u\times v}.
    \end{split}
\end{equation}
Where $L_{\theta,t}$ is a finite constant. In practice, the network $\epsilon_\theta$ is often implemented using convolutions, attentions and element-wise operations, which lead to such spatial smoothness.
\end{assumption}
\noindent
According to \citet{song2020denoising}, DDIM update rule is given by:
\begin{equation}
    \label{eq:ddim}
    x_{t-1} = \sqrt{\alpha_{t-1}} \cdot \hat{x}_0 + \sqrt{1 - \alpha_{t-1}} \cdot \epsilon_\theta(x_t, t).
\end{equation}
\begin{equation}
    \label{eq:ddim-2}
    \hat{x}_0 = \frac{x_t - \sqrt{1 - \alpha_t} \cdot \epsilon_\theta(x_t, t)}{\sqrt{\alpha_t}}.
\end{equation}
Where $\alpha_t$ are constants determined by a diffusing schedule.
Each sampling step can be shorthanded as a function $x_{t-1}=f_t(x_t)$, and the final output is:
\begin{equation}
\label{eq:sampling}
x_0 = f_1 \circ f_2 \circ \cdots \circ f_T(x_T),
\end{equation}
where $x_T \sim \mathcal{N}(0, I)$ is the initial noise.

\paragraph{Theorem 1.}
\textit{Under the above assumptions, the output image $x_0$ generated by DDIM is locally Lipschitz continuous with respect to the input noise $x_T$. Moreover, the spatial gradient of $x_0$ is bounded:}
\[
\| \nabla x_0 \| \leq L \quad \text{for some constant } L \in \mathbb{R}
\]

\begin{proof}
Each function $f_t$ is spatially Lipschitz continuous due to the spatial Lipschitz property of $\epsilon_\theta$. Scalar operations in Eq.\ref{eq:ddim}-\ref{eq:ddim-2} is also Lipschitz with constant $L_{\epsilon,t}$ due to the finite input parameters and scalar operations. Then their composition in Eq.~\ref{eq:sampling} is also spatially Lipschitz with constant: $L = \prod_{t=1}^{T} L_{\theta,t}L_{\epsilon,t}$.


\end{proof}

\subsection{Qualitative intuition for Theorem \ref{th:continuous}.} To qualitatively visualize the differences in noise tolerance between 2D range view images and 3D point clouds, We add exactly the same Gaussian noises to the same point cloud, which is displayed in both RV and 3D in Figure \ref{fig:2d_3d_intuition}. Adding large noises ($\sigma=5$m) does not change the perceivable semantics in images, but dramatically shuffles the point cloud to a cloud of mess, making it beyond human understanding. To preserve the spatial structure after adding noise, the noise has to be very tiny ($\sigma=0.2$m). While the noise is still clearly visible on 3D according to the wiggling points, such small noise is unpercievable in 2D. Therefore, we conclude that 2D is much more tolerant and robust to large noises but ignores small noises, while 3D is more sensitive and intolerable to large noises but is better at processing small noises. Therefore, 2D RV is better at generating layouts from scratch using diffusion models, while 3D networks are better at rectifying local geometry of the diffusion-generated point clouds.

\begin{figure*}[h!]
  \centering
  \includegraphics[width=0.9\linewidth]{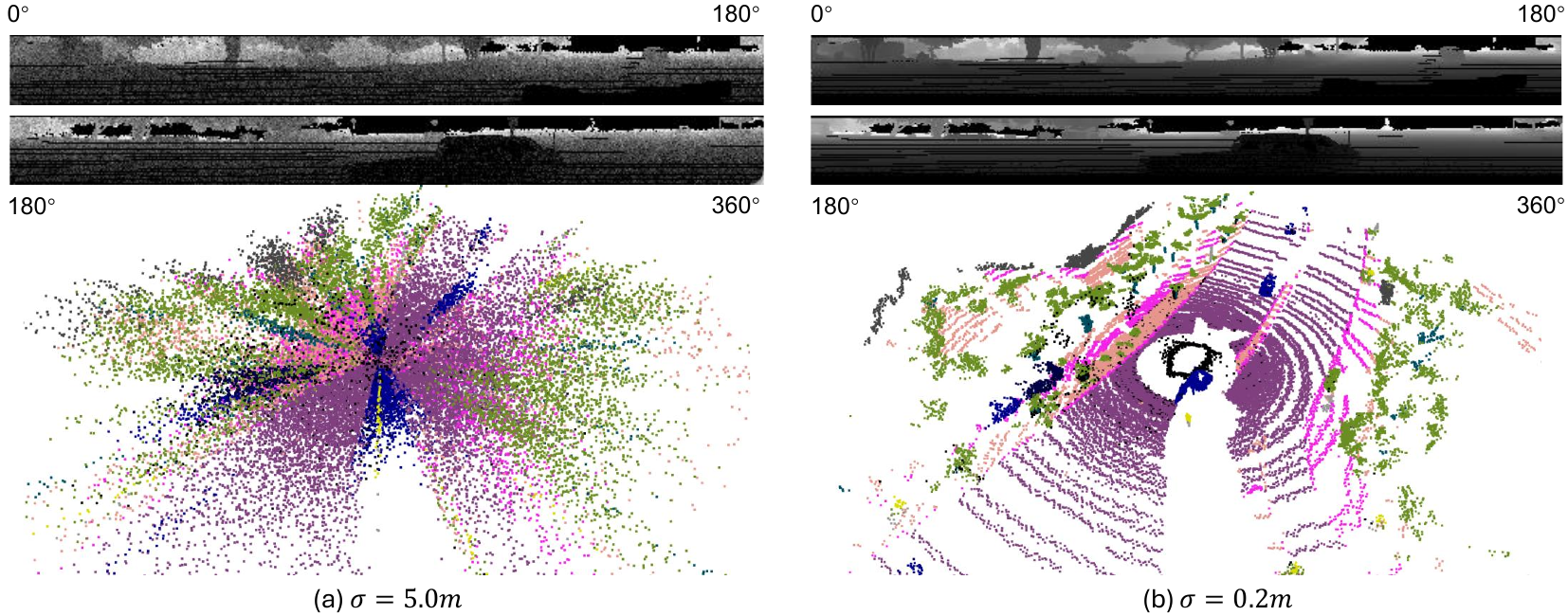}
  \caption{\textbf{Intuitive validation for Theorem \ref{th:continuous}}, \textit{i.e.,} 2D and 3D models have inherently different sensitivity to disturbances. (a) Adding large noises ($\sigma=5$m) does not hinder human understanding of image contents on RV, yet completely sets the point cloud into a mess. (b) Adding small noises ($\sigma=0.2$m) adds visible noises in the point cloud while retaining its spatial structure, but this noise scale is almost non-percievable in RV.}
  \label{fig:2d_3d_intuition}
\end{figure*}

\subsection{Additional Experiment Setups}
\label{ap:exp_setup}
\paragraph{Training details.} For the first-stage diffusion model LiDM, we keep the default set of parameters from the original paper for SemanticKITTI, which is conducted with $(64, 1024)$ RV image size which is shrunk into $(16, 128)$ size for latent diffusion with $16$ channels. Unlike the original paper, we also place $(64, 1024)$ for nuScenes to avoid divergence, and disable the logarithmic scaling of depth values. The LiDM model is trained with AdamW optimizer under $1\times 10^{(-6)}$ learning rate for up to 150 epochs. For the second stage RRN model, we follow the standard Pointcept training protocol without task-specific hyperparameter tuning. All RRN models are trained for a fixed 50 epochs using AdamW with OneCycleLR (max LR 0.002, weight decay 0.005), while certain modules in PTV3 have 0.0002 learning rate according to default configuration. No separate validation set or early stopping is used; instead, we observe stable convergence across datasets and report last-epoch performance. This fixed protocol ensures fair and reproducible comparisons across datasets and settings.

\paragraph{Licenses.} The datasets we use are: 1) KITTI which uses CC BY-NC-SA 3.0, 2) KITTI-360 which uses CC-BY-NC-SA-3.0, 3) nuScenes which uses custom CC BY-NC-SA 4.0 with exceptions for startups and research and separate commercial licenses, 4) Waymo Open Dataset which uses a custom non-commercial licence.
\begin{table*}[t]
  \centering
  \resizebox{0.85\linewidth}{!}{
  \begin{tabular}{lcccccc}
    \toprule
\multicolumn{3}{c}{\textbf{Configurations}} & \multicolumn{2}{c}{\textbf{Perceptual}} & \multicolumn{2}{c}{\textbf{Statistical}}\\
\cmidrule(lr){1-3} \cmidrule(lr){4-5} \cmidrule(lr){6-7}
Backbone & Loss & RRN Sem. Input&FSVD$\downarrow$	&FPVD$\downarrow$	&JSD$\times10^{(-2)}\downarrow$	&MMD$\times 10^{(-5)}\downarrow$\\
\midrule
/ (baseline) & / &- &11.7	&13.1	&11.6	&21.5\\
\midrule
SPUNet & Welsch & -  &10.6	&12.1	&\textbf{6.7}	&\textbf{16.6}\\
SPUNet & MSE & -   &27.0	&25.7	&10.1	&22.3\\
SPUNet & Welsch & \checkmark &\underline{9.8}	&\textbf{9.7}	&\underline{9.5}	&\underline{20.9}\\
\midrule
PTV3 & Welsch &	 -          &\textbf{9.3}	&\underline{9.8}	&9.7	&22.1\\
PTV3 & MSE & -              &48.9	&50.9	&10.7	&26.1\\
PTV3 & Welsch & \checkmark  &10.2	&10.6	&9.6	&21.9\\
    \bottomrule
  \end{tabular}
  }
  \caption{\textbf{Ablation experiment} on nuScenes, including RRN backbone structure, loss function and semantic-map input to RRN.}
  \label{tab:ablation_nusc}
\end{table*}

\begin{table*}[t]
  \centering
  \resizebox{0.85\linewidth}{!}{
  \begin{tabular}{lcccccc}
    \toprule
\multicolumn{3}{c}{\textbf{Configurations}} & \multicolumn{2}{c}{\textbf{Perceptual}} & \multicolumn{2}{c}{\textbf{Statistical}}\\
\cmidrule(lr){1-3} \cmidrule(lr){4-5} \cmidrule(lr){6-7}
Backbone & Loss & RRN Sem. Input&FSVD$\downarrow$	&FPVD$\downarrow$	&JSD$\times10^{(-2)}\downarrow$	&MMD$\times 10^{(-5)}\downarrow$\\
\midrule
/ (baseline) & / &- &11.8	&12.4	&9.5	&13.4\\
\midrule
SPUNet & Welsch & -  &\underline{9.8}	&\textbf{9.9}	&7.9	&13.4\\
SPUNet & MSE & -   &24.8	&27.3	&8.7	&\textbf{11.7}\\
SPUNet & Welsch & \checkmark &\textbf{9.6}	&11.6	&7.8	&\underline{13.0}\\
\midrule
PTV3 & Welsch &	 -   &10.3	&\underline{11.4}	&\underline{7.6}	&13.6\\
PTV3 & MSE & -   &25.6	&27.8	&9.2	&15.3\\
PTV3 & Welsch & \checkmark   &11.4	&14.5	&\textbf{7.5}	&\underline{13.0}\\
    \bottomrule
  \end{tabular}
  }
  \caption{\textbf{Ablation experiment} on Waymo, including RRN backbone structure, loss function, and semantic-map input to RRN.}
  \label{tab:ablation_waymo}
\end{table*}

\begin{table}[t]
  \centering
  \resizebox{\linewidth}{!}{
  \begin{tabular}{lcccc}
    \toprule
\multirow{2}{*}{$\nu$} & \multicolumn{2}{c}{Perceptual} & \multicolumn{2}{c}{Statistical}\\
\cmidrule(lr){2-3} \cmidrule(lr){4-5}
&FSVD$\downarrow$	&FPVD$\downarrow$	&JSD$\times10^{(-2)}\downarrow$	&MMD$\times 10^{(-5)}\downarrow$\\
\midrule
/ (baseline) &18.3	&15.3	&7.1	&16.2\\
\midrule

0.01	&18.1	&14.9	&6.9	&\textbf{16.2}\\
0.05	&17.9	&14.3	&6.9	&\underline{16.3}\\
0.1	    &17.9	&\underline{13.4}	&7.0	&\underline{16.3}\\
0.5	    &\underline{17.7}	&13.9	&\underline{6.8}	&17.2\\
1.0     &\textbf{16.0}   &\textbf{13.3}	&\textbf{6.7}	&17.5\\
    \bottomrule
  \end{tabular}
  }
  \caption{\textbf{Parameter tuning experiment of $\nu$} with PTV3 on SemanticKITTI.}
  \label{tab:parameter}
  \vspace{-0.5cm}
\end{table}

\paragraph{Metric specifications.} Our performance metrics measure generation quality from two different perspectives: perceptual and distributional. 
The extended FIDs include Fr\'echet Sparse Volume Distance (FSVD) and Fre\'chet Point Voxel Distance (FPVD), which are calculated with the average bottleneck features of pre-trained MinkowskiNet \citep{choy20194d} and SPVCNN \cite{tang2020searching}, respectively. 
The JSD and MMD represent global distributional similarity in BEV space, where the point clouds are voxelized with $0.5m$ voxel size and accumulated along the z-axis into a 2D bin-count map. JSD is then defined as the Jensen-Shannon Divergence between the average BEV map distribution of two sets of point clouds. MMD represents the average of the minimum distance from generated BEV maps to ground-truth BEV maps.

\paragraph{Training cost of RRN.}
With a batch size of 60, the RRN training time is 2 hours on a single RTX 4090. In comparison, LiDM training on 4 RTX 4090s costs 36 hours for 150 epochs. 
Therefore, RRN is much more training-efficient than LiDAR RV diffusion.

\subsection{Additional Experiments}
\label{ap:exp}

\begin{table*}[t]
    \centering
    \resizebox{\linewidth}{!}{
    \begin{tabular}{l l c c c c c c c}
    \toprule
    \textbf{RRN Training Data} & \textbf{Test Data} & \textbf{C-D} $\downarrow$ & \textbf{F-score} $\uparrow$ & \textbf{RMSE} $\downarrow$ & \textbf{MedAE} $\downarrow$ & \textbf{LPIPS} $\downarrow$ & \textbf{SSIM} $\uparrow$ & \textbf{PSNR} $\uparrow$ \\
    \midrule
    Val (3 frames $\times$ 7 scenes) & Val (1 frame $\times$ 7 scenes) & 0.0557 & 0.966 & 1.64 & 0.00647 & 0.0233 & 0.978 & 33.9 \\
    None (GS-LiDAR original) & Val (1 frame $\times$ 7 scenes) & 0.0560 & 0.963 & 1.64 & 0.00832 & 0.0231 & 0.978 & 34.0 \\
    Train (46 frames $\times$ 7 scenes) & Val (4 frames $\times$ 7 scenes) & 0.0473 & 0.966 & 2.46 & 0.0171 & 0.0651 & 0.776 & 30.3 \\
    None (GS-LiDAR original) & Val (4 frames $\times$ 7 scenes) & 0.0480 & 0.961 & 2.21 & 0.0156 & 0.0518 & 0.818 & 31.2 \\
    \bottomrule
    \end{tabular}
    }
    \caption{RRN performance on GS-LiDAR reconstructed LiDAR data under different training and test data configurations.}
    \label{tab:rrn_results}
\end{table*}

\paragraph{Ablation on nuScenes.} We ablate various components of the L3DR framework in nuScenes in Tab.~\ref{tab:ablation_nusc}. We calculated metrics with voxelized GT instead of original GT for fast evaluation, which results in slightly different figures. Specifically, applying MSE Loss deteriorates all metrics, which is also consistent with the conclusion on KITTI. The best overall performance again falls on SPUNet with Welsch loss and Segmentation input, with $9.8$ FSVD, $9.7$ FPVD, $9.5\times 10^{-2}$ JSD, and $20.9\times 10^{-5}$ MMD. On the other hand, adding semantic maps for PTV3 does not provide such consistent performance boosts, making it a suboptimal choice. Additionally, substituting the 3D RRN with a 2D UNet again deteriorates all performance metrics. We conclude that SPUNET and Welsch Loss are the best setup for nuScenes.

\paragraph{Ablation on Waymo.} We ablate various components of the L3DR framework in Waymo Open Dataset in Tab.~\ref{tab:ablation_waymo}. We calculated metrics with voxelized GT instead of original GT for fast evaluation, which results in slightly different figures. Specifically, applying MSE Loss deteriorates most performance metrics compared to baseline, where the FSVD and FPVD all doubled for SPUNet and PTV3. The best overall performance attributes to SPUNET with Welsch loss, with $9.8$ FSVD, $9.9$ FPVD, $7.9\times 10^{-2}$ JSD, and $13.4\times 10^{-5}$ MMD. Additionally, using a 2D Image Unet as RRN again brings considerable degradation. We conclude that SPUNET and Welsch Loss are the best setup for Waymo.

\paragraph{Parameter choice.} The choice of the width parameter $\nu$ of the Welsch Loss is discussed in Tab.~\ref{tab:parameter}. We observe contrary trends between the first 3 metrics and the last metric, where FSVD, FPVD and JSD improve with larger $\nu$ while MMD deteriorate in contrast. This trend reveals an intriguing property of the metrics: MMD is a local metric accumulated from individual point cloud pairs, which favors an average shape to match all possible point clouds. In comparison, FID and JSD are global metrics calculated over all point clouds, which favors perceptual realism and coherence. Therefore, we interpret that larger $\nu$ means individual point clouds are highly rectified and further from the average shape, which results in higher MMD, while larger $\nu$ offers better realism which is preferable for perceptual FID scores. Therefore, we choose $\nu=0.5$ for good FID and JSD performance, while keeping an acceptable performance on MMD.

\paragraph{Compatibility with reconstruction methods.} Table \ref{tab:rrn_results} shows RRN experiments on very limited reconstruction data of GS-LiDAR on KITTI-360. Since GS-LiDAR reconstruction fails on 3 out of 10 scenes due to VRAM OOM, the experiments were conducted over 7 scenes only (all training data has been augmented with z-axis rotation, flipping, and scaling, the network was trained for 1000 epochs, and the width parameter 
 is set to 0.2 due to shrinked artifact sizes). Specifically, we trained two RRNs by using the GS-LiDAR validation frames (4/50 frames in a scene) and the GS-LiDAR training frames (46/50 frames in a scene), respectively. While training RRN on the validation frames, we use the first 3 validation frames of each scene for training and the last validation frame for testing. While training RRN on the training frames, we test the trained model on all 4 validation frames of the scene. We can see that the validation-frame trained RRN performs slightly better than the original GS-LiDAR reconstructed frames across most performance metrics, while the training-frame trained RRN performs worse on multiple metrics besides Chamfer Distance and F-Score, confirming that reconstruction artifacts are only present in the validation frames. These new experiments over limited training data suggest that RRN could be incorporated as a promising addition to the reconstruction methods like GS-LiDAR. Experiments on larger-scale reconstruction data counts as future work.

\subsection{Limitations}
\label{ap:limitation}
While our proposed 3D rectification framework significantly improves the geometric fidelity of diffusion-generated LiDAR outputs, several limitations remain. First, the method depends on the initial quality of the range-view depth prediction. Severe RV artifacts that exceed the restoration capacity of the RRN may still leak into the rectified point clouds. Second, although our approach suppresses RV artifacts such as wavy surfaces, round corners, and depth bleeding, our pipeline does not explicitly model such artifacts, which may still result in room for improvement, especially in complex scenes. Finally, the L3DR framework relies on high-quality semantic-conditioned LiDAR diffusion model for training data generation. The general application of the L3DR framework in indoors or objects depends on the availability and quality of conditional diffusion models in these fields.

\begin{table*}
    \centering
    \small
    \setlength{\aboverulesep}{0pt} 
\setlength{\belowrulesep}{0pt} 
    \resizebox{\linewidth}{!}{
    \begin{tabular}{l|c|c}
    \toprule
         Method & Benefits &	Limitations \\
         \midrule
\multirow{3}{*}{\begin{tabular}{@{}l@{}}RV Diffusion \\(e.g., R2DM, LiDM, ours)\end{tabular}}	&• Lightweight and low-cost generation & \multirow{3}{*}{\begin{tabular}{@{}c@{}}• Single-frame generation Geometry artifacts\\• Moderate Gen2Real gap))\end{tabular}}\\
&• Broad condition compatibility & \\
&• Natural modeling of LiDAR’s projective structure	& \\
\midrule
\multirow{3}{*}{\begin{tabular}{@{}l@{}}BEV Diffusion \\(e.g., UltraLiDAR)\end{tabular}}	&\multirow{3}{*}{\begin{tabular}{@{}l@{}}• Supports manipulation of BEV grid\\• Local spatial control with partial point input\end{tabular}} &• Poor projection realism \\
& 	& • Limited multimodal control \\
& & • Still single-frame\\
\midrule
\multirow{3}{*}{\begin{tabular}{@{}l@{}}4D Occupancy Diffusion  \\(e.g., UniScene, OccSora)\end{tabular}}	&• Spatiotemporal coherence 	&• High computational burden \\
&• Multimodal output such as image and occupancy &• Occupancy control is unnatural to human users \\
&• Spatio-temporal consistency &• Hard-hit approximated with volumetric rendering\\
\midrule
Simulators (e.g., CARLA)	&• Accurate projection model • Expanding asset libraries	&• Large Sim2Real gap • Difficult to scale up\\
\midrule
\multirow{3}{*}{\begin{tabular}{@{}l@{}}Reconstruction Methods  \\(e.g., LiDiff, NeRF-LiDAR)\end{tabular}}	&\multirow{3}{*}{\begin{tabular}{@{}l@{}}• High geometric fidelity  \\• Fast inference (esp. 3DGS)	\end{tabular}}&• Requires per-scene training\\
&&• Limited layout diversity \\
&&• Large storage and pre-processing overhead\\
\bottomrule
    \end{tabular}
    }
    \caption{Comparative overview of existing genres for LiDAR point cloud generation.}
    \label{tab:comparative_overview}
\end{table*}

\begin{figure*}[h!]
  \centering
  \includegraphics[width=0.9\linewidth]{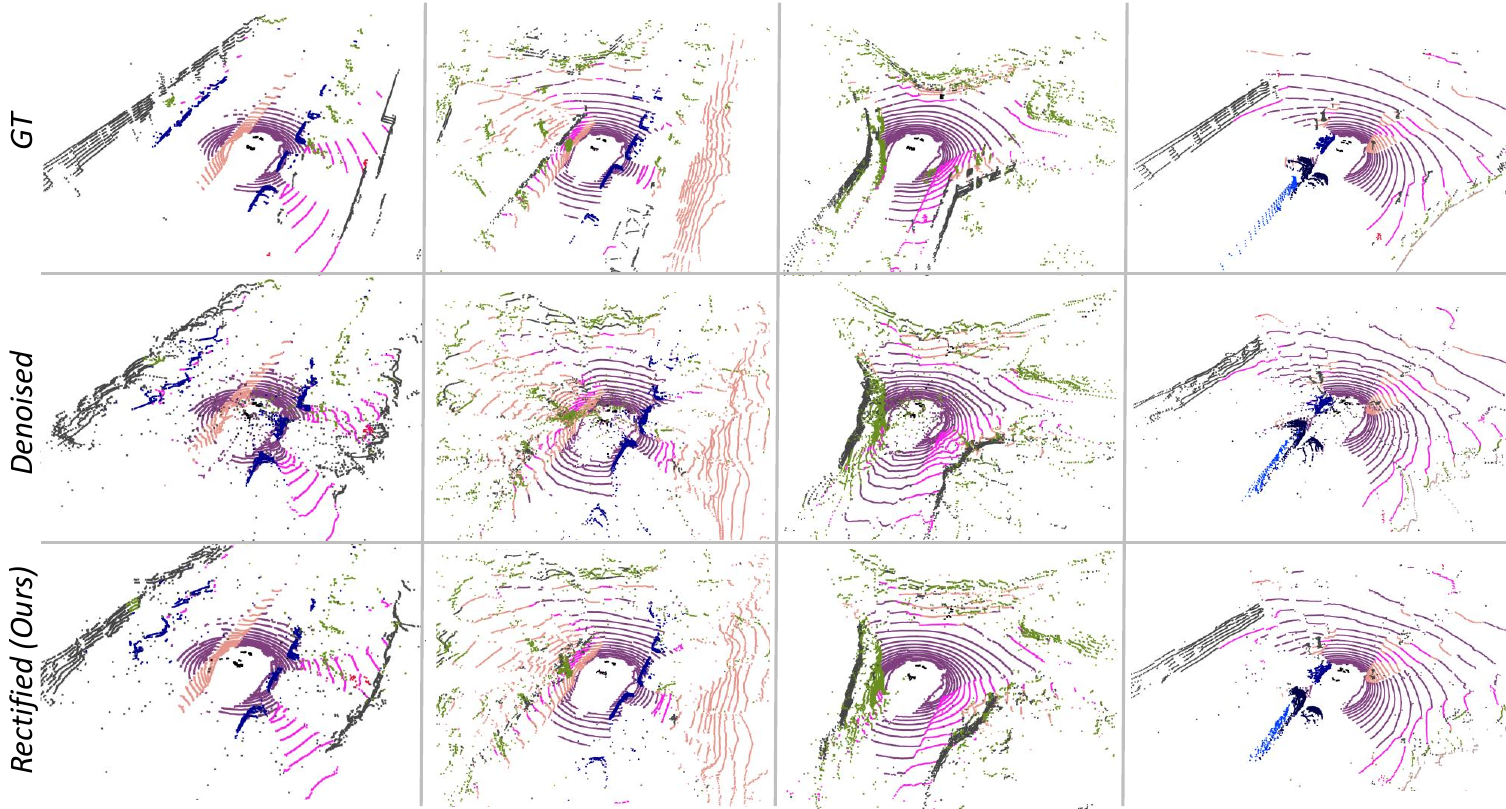}
  \caption{\textbf{Visualization} of conditional generation on nuScenes. Our results exhibit better geometry-realism such as sharp borders, flat surfaces, less noise points and reduced depth bleeding.}
  \label{fig:nuscenes}
\end{figure*}

\begin{figure*}
  \centering
  \includegraphics[width=0.9\linewidth]{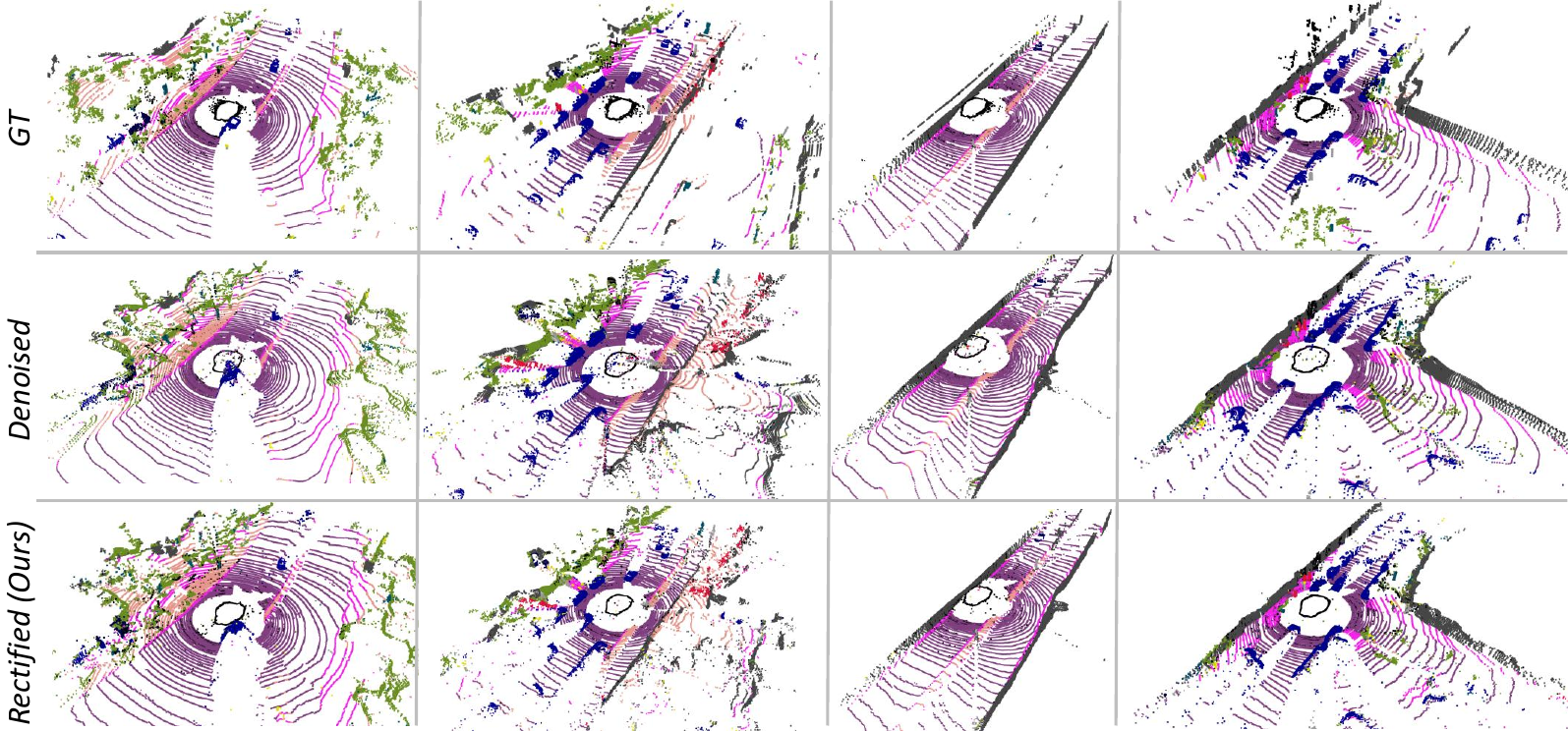}
  \caption{\textbf{Visualization} of conditional generation on Waymo Open Dataset. Our results exhibit better geometry-realism with reduced depth bleeding, flattened surfaces, rectified corners and reduced noise points.}
  \label{fig:waymo}
\end{figure*}

\subsection{Discussions}
\paragraph{Integraton of two-stage training into one.} We think it is inefficient and damaging to integrate the two-stage training into a single-stage joint training. Compared with point clouds generated by a converged diffusion model, point clouds generated during training have a compound of residual Gaussian noise and RV artifacts. Such noisy point clouds offer little help to the RV artifact regression. They also make RNN training challenging, leading to a degraded RNN model and further degraded generation performance.

\paragraph{Regression of all error regions rather than only regressing RV artifacts.} We highlight that RV artifacts are errors with consistent patterns across different diffusion models and datasets, while high-bias errors are not even consistent in the same model and same dataset with different random seeds. This is because RV artifacts are caused by inherent property that a 2D model tends to attend to global structures and ignore small errors as discussed in Section \ref{sec:pilot}, which is shared across all models despite nuances in implementation. On the other hand, high-bias errors are introduced by insufficient geometric constraints in large chunks of the semantic map control input, which can have multiple reasonable interpretations and each diffusion run could randomly choose from one of them. Therefore, it shares significantly less general value to regress all errors instead of RV artifacts only, which makes the model specific to that run rather than generally applicable to other models.
\label{sec:regress_all}

\paragraph{Comparative overview of RV in all LiDAR generation genres.} To contextualize our method, we provide a comparative overview of representative LiDAR generation paradigms in Table \ref{tab:comparative_overview}. Unlike BEV or volumetric (4D occupancy), range-view (RV) LiDAR generation natively handles self-occlusion and projection realism, which are crucial in LiDAR perception. Additionally, image-based conditioning is far more accessible and generalizable than occupancy grids or partial point clouds, thus positioning RV diffusion as a lightweight and highly versatile choice. Further, RV diffusion enjoys a higher level of versatility compared to driving simulators or reconstruction methods. At the other end, RV LiDAR generation does potentially produce geometric artifacts, but this is rather a common issue across different LiDAR generation paradigms. Hence, the conclusion is that each generation paradigm has its unique pros and cons and application scenarios. RV diffusion possesses superior versatility and computational efficiency which is a promising research direction in our humble opinion.
We aim to address the key bottleneck of RV diffusion, i.e., geometry realism, and propose an effective solution through 3D residual regression. We believe that our work meaningfully advances the line of RV-based LiDAR generation research.

\paragraph{Directly diffuse in 3D space instead of RV.} Diffusing in 3D from scratch cannot generate authentic global structures as reported in Nunes et al [1]. Our preliminary experiments on increasing diffusing steps in 3D also show receding performance as more diffusion steps are involved. Our understanding is that 2D diffusion models are good at reducing global noise instead of local noises or range view (RV) artifacts, while 3D networks are good at handling local geometry instead of global noises. We will discuss this issue in the revised manuscript.

\paragraph{Multi-step rectification with RRN.} Empirically, we have tested an alternative version of multi-step RRN where each step are trained to regresses the residual error of the previous step. However, experiments reveal receding performance as more steps are added. We interpret this phenomenon as increasingly harder and misaligned learning objectives hinder the learning of of a consistent strategy. For example, 90\% of the error may come from RV artifacts in the first round, the second round may still have 30\% unsolved RV artifacts and 60\% residual regression error of the first step, and the third round still contains raw RV artifatcs, artifacts unsolved in the first step, and those unsolved in the second step. More steps indicate a more convoluted error pattern which hinders accurate recognition and cancellation of these residuals, which damages the overall performance.

\paragraph{Difference between RRN and post-processing networks in 3DGS.} We point out that in reconstruction models like NeRF-LiDAR, LiDAR4D, GS-LiDAR, and LiDAR-RT, the post-processing by U-Net serves to regress raydrop possibilities for reconstruction. It is formulated as a binary classification task and trained with BCE loss. Differently, our 3D RRN is a novel design to LiDAR diffusion aiming at removing RV artifacts present in 3D coordinates. It is formulated as a vector regression task and trained with MSE/Welsch loss. Hence, both approaches share little similarity besides using U-Nets for post-processing.

\end{document}